\newcommand{\mb}[1]{\mathbf{#1}}
\newcommand{\R}{\mathbb{R}}
\newcommand{\C}{\mathbb{C}}
\newtheorem{definition}{Definition}
\newtheorem{proposition}{Proposition}
\newtheorem{remark}{Remark}
\newtheorem{notation}{Notation}
\newcommand{\new}[1]{{#1}}
\title[Complex-Weighted Convolutional Networks: Provable Expressiveness via Complex Diffusion]{Complex-Weighted Convolutional Networks: Provable Expressiveness via Complex Diffusion}
\author[Cristina L\'opez Amado]{%
Cristina L\'opez Amado\\
IST Austria\\
\email{cristina.lopezamado@ist.ac.at}\And
Tassilo Schwarz\\
Mathematical Institute\\University of Oxford\thanks{and Mathematical bioPhysics Group, Max Planck Institute for Multidisciplinary Sciences}\\
\email{tassilo.schwarz@maths.ox.ac.uk}\And
Yu Tian\\
Center for Systems Biology Dresden\thanks{Max Planck Institute of Molecular Cell Biology and Genetics, Max Planck Institute for the Physics of Complex Systems, and Dresden University of Technology}\\
\email{yu.tian.research@gmail.com}\And
Renaud Lambiotte\\
Mathematical Institute\\University of Oxford\\
\email{renaud.lambiotte@maths.ox.ac.uk}
}
\begin{document}

\maketitle

\begin{abstract}
Graph Neural Networks (GNNs) have achieved remarkable success across diverse applications, yet they remain limited by oversmoothing and poor performance on heterophilic graphs. To address these challenges, we introduce a novel framework that equips graphs with a complex-weighted structure, assigning each edge a complex number to drive a diffusion process that extends random walks into the complex domain. We prove that this diffusion is highly expressive: with appropriately chosen complex weights, any node-classification task can be solved in the steady state of a complex random walk. Building on this insight, we propose the Complex-Weighted Convolutional Network (CWCN), which learns suitable complex-weighted structures directly from data while enriching diffusion with learnable matrices and nonlinear activations. CWCN is simple to implement, requires no additional hyperparameters beyond those of standard GNNs, and achieves competitive performance on benchmark datasets. Our results demonstrate that complex-weighted diffusion provides a principled and general mechanism for enhancing GNN expressiveness, opening new avenues for models that are both theoretically grounded and practically effective.\footnote{Code: \href{https://github.com/clopezamado/complex-weighted-convolutional-networks.git}{\texttt{https://github.com/clopezamado/complex-weighted-convolutional-networks.git}}} 
\end{abstract}

\section{Introduction}
Graph Neural Networks (GNNs) \cite{scarselli2008graph} have emerged as a powerful class of machine learning models to deal with relational and structured data. They have shown state-of-the-art performance in a wide range of applications, such as recommendation systems \cite{fan2019graph}, molecular property prediction \cite{gilmer2017neural}, \new{webpage classification \cite{webpageClas} or predictions in citation networks \cite{citationNetwork}}.

\paragraph{Limitations of GNNs.} Most Graph Neural Network (GNN) architectures are built on the message passing paradigm, where each node iteratively aggregates information from its neighbors to update its representation. While this approach has been highly successful, “classical” GNNs usually face two well-documented challenges: (1) poor performance on heterophilic graphs \new{\cite{zheng2022graph}} and (2) oversmoothing \new{\cite{rusch2023survey}}. Both stem from the inherent difficulty of capturing long-range dependencies.
The heterophily problem arises because message passing implicitly assumes homophily \new{\cite{zheng2022graph}}, i.e. that neighboring nodes tend to share similar features and labels, an assumption that is often not verified in real-world networks \cite{zhu2020beyond}. Oversmoothing, on the other hand, refers to the tendency of node representations to become indistinguishable after repeated message passing, eroding their discriminative power.
Oversmoothing has been particularly well studied in the context of Graph Convolutional Networks (GCNs). From a diffusion perspective, GCNs can be seen as implementing an augmented heat diffusion process over the graph \new{\cite{li2018deeper}}. In such a process, node features within the same connected component converge to identical values \cite{li2018deeper}, a phenomenon that significantly contributes to oversmoothing in GCNs \cite{oono2020graph,cai2020note}.

\paragraph{Beyond Heat Diffusion.} To address these limitations, recent research has focused on developing more expressive diffusion processes. Prominent examples include Sheaf Convolutional Networks (SCNs) \cite{hansen2020sheaf,bodnar2022neural} and Graph Neural Reaction-Diffusion Networks (GREAD) \cite{choi2023gread}. Both approaches enrich the underlying graph structure to support more sophisticated forms of diffusion.
GREAD has shown strong empirical performance on standard benchmarks, providing a practical and scalable alternative to sheaf-based methods, though it does not come with theoretical guarantees. SCNs, in contrast, offer provable expressiveness, but their guarantees require the sheaf dimension to scale with the number of target classes, a requirement that increases model parameters and, in turn, computational cost.

\begin{figure}
    \centering
\begin{subfigure}{0.96\textwidth}
    \centering
    \begin{tabular}{@{}c@{\hspace{3pt}}c@{\hspace{3pt}} c@{\hspace{3pt}} c@{}}
        \makebox[0.24\textwidth]{\scriptsize{Initial state}} &
        \makebox[0.24\textwidth]{\scriptsize{Diffusion time 1}} &
	    \makebox[0.24\textwidth]{\scriptsize{Diffusion time 5}} &
        \makebox[0.24\textwidth]{\scriptsize{Diffusion time 12}} \\
        \includegraphics[width=0.24\textwidth]{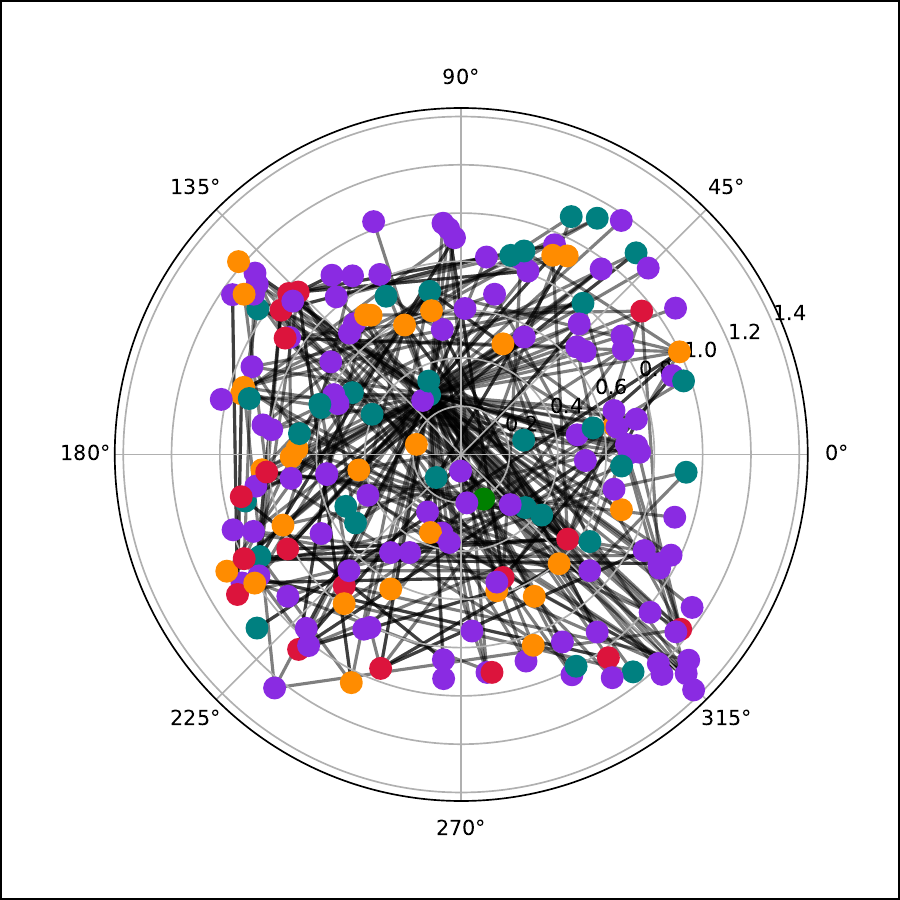} &
        \includegraphics[width=0.24\textwidth]{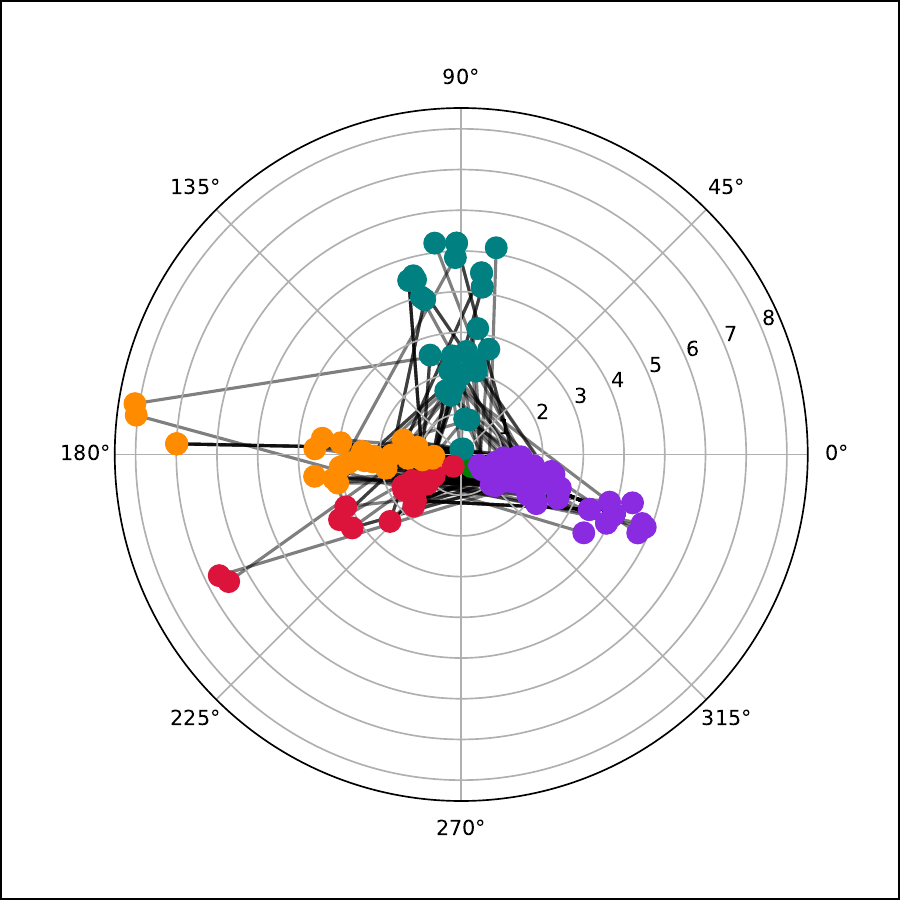} &
        \includegraphics[width=0.24\textwidth]{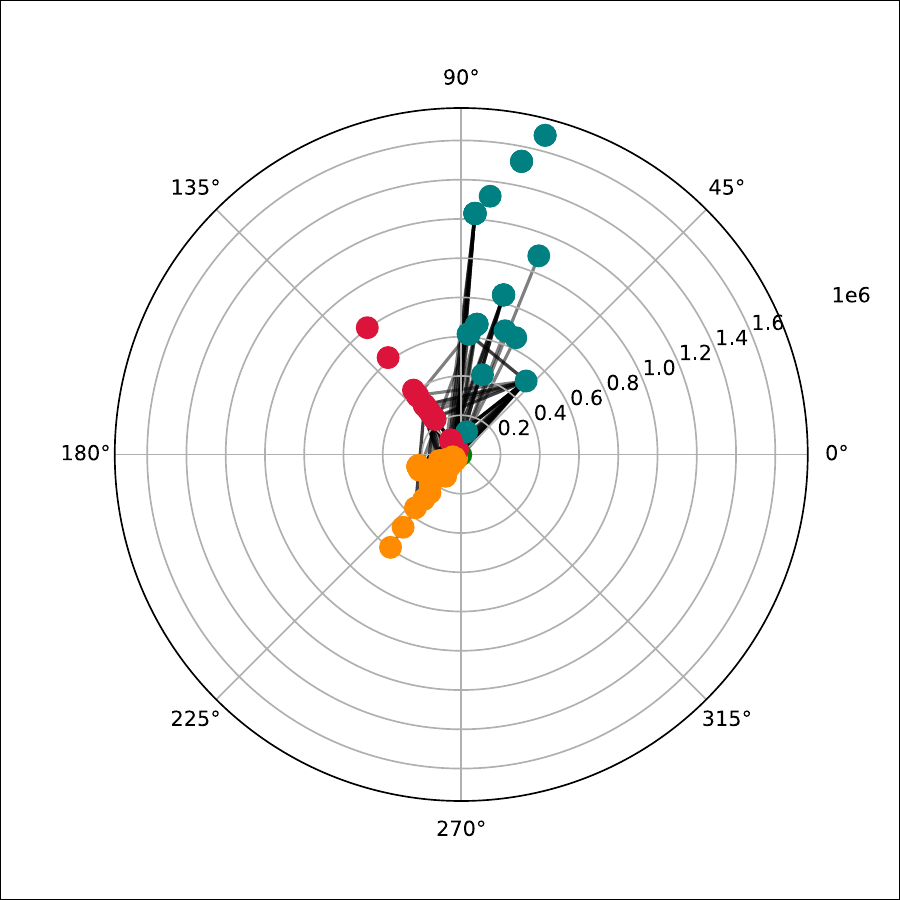} &
        \includegraphics[width=0.24\textwidth]{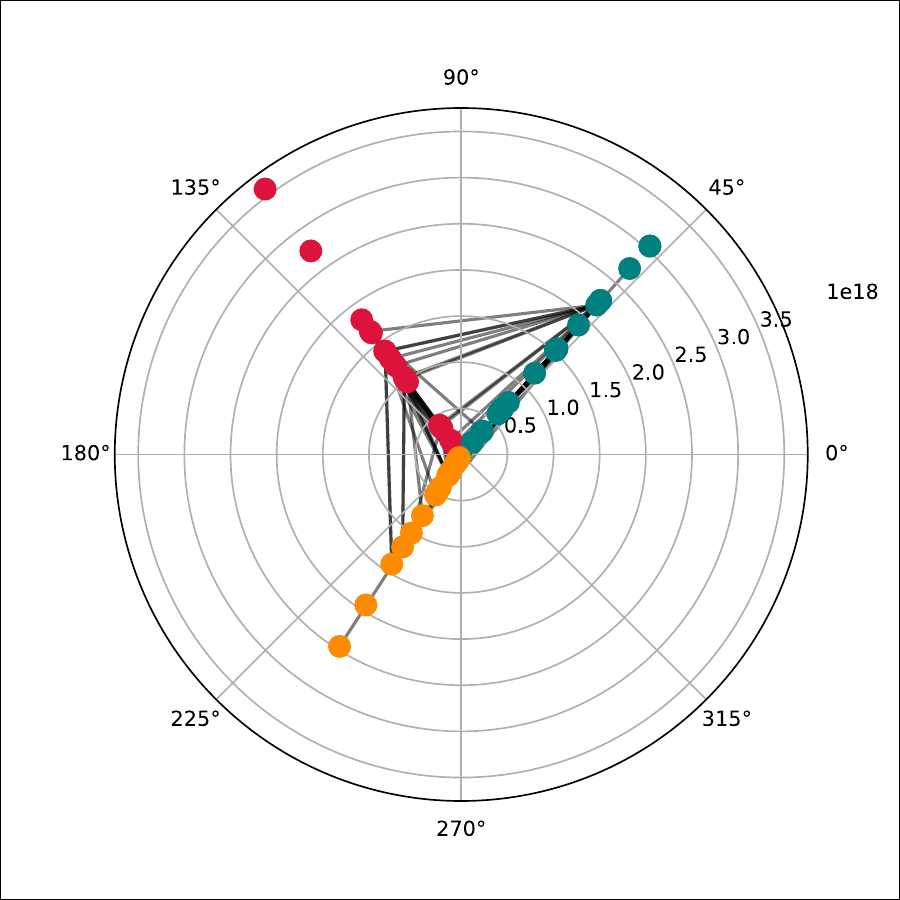}
    \end{tabular}
\label{fig:diffusion-left}
\end{subfigure}
\includegraphics[trim={0cm 2cm 0cm 2cm},clip,width=0.6\textwidth]{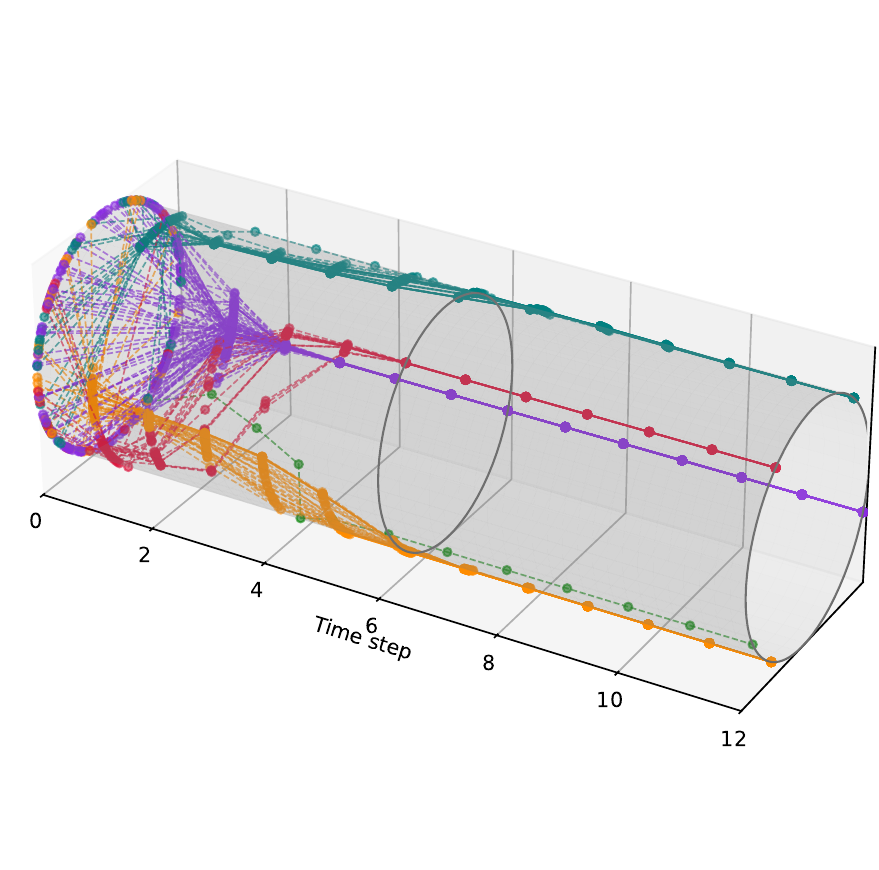}
\caption{Node features in polar coordinates at selected diffusion times (top) and evolution of node feature phases over time during diffusion (bottom).
    Complex random walk diffusion progressively separates the classes of a complex-weighted graph: nodes in the same class (same color) converge to complex numbers with the same phase. The figure corresponds to the dataset Texas \cite{geomgcn}. From diffusion time 5 onward, the purple class is omitted for better visualization. 
    }
    \label{fig:diffusion}
\end{figure}
\paragraph{Contributions.} We introduce a new approach that modifies the underlying “geometry” of the graph to define a more expressive diffusion process, forming the foundation for a novel GNN architecture. Concretely, we propose equipping the graph with a complex-weighted structure, where each edge is assigned a complex number. This extension enables a diffusion process that generalizes random walks to the complex domain \cite{tiancomplexnet}.

Our main contributions are as follows:
\begin{itemize}
\item We propose a novel way to enrich graph structure by assigning complex weights to edges, enabling a diffusion process with provable expressiveness guarantees. We show that for any node-classification task, suitable weights can be chosen such that classification is possible in the steady state of the resulting complex random walk (see Figure~\ref{fig:diffusion}). 
\new{To the best of our knowledge, this is the first study establishing a connection between the convergence properties of complex random walks and the expressivity of GNNs for node classification tasks.}
\item In contrast to sheaf-based diffusion, our theoretical guarantees are independent of the number of target classes, allowing for \new{a simpler model.}
\item We introduce the \textit{Complex-Weighted Convolutional Network} (CWCN), a GNN that augments complex random-walk diffusion and achieves greater expressiveness than GCNs. CWCN reduces the number of hyperparameters compared to prior methods while maintaining competitive performance on benchmarks.
\end{itemize}

\section{Theoretical Background}
\paragraph{Problem Setting}
We consider the problem of node classification in undirected graphs. Formally, let \( G = (V, E) \) be an undirected graph, with set of nodes  \( V = \{v_1, \dots, v_n\} \) and set of edges \( E \subset \{\{v_i, v_j\} \mid v_i, v_j \in V\} \). Each node \( v_i \in V \) is associated with a feature vector \( \mathbf{x}_i \in \mathbb{R}^k \) and belongs to a class \( y_i \in \{1, \dots, C\} \), where \( C \) is the number of possible classes. 
We denote by $\mb A\in\R^{n\times n}$ the adjacency matrix, $\mb D$ the degree matrix and $N(v_i)$ the set of neighbours of node $v_i$. In addition, we group all node feature vectors into a feature matrix $\mb X\in\R^{n\times k}$.

Given labels for a subset of nodes, the goal is to learn a function \mbox{\( f: V \to \{1, \dots, C\} \)} that predicts the class labels for the remaining nodes in the graph.
We focus on the semi-supervised node classification setting, where the complete graph structure -- including all node features and edge connections -- is available during training, but only a subset of nodes is labelled. 
In this setting, we denote by \( V_{\text{train}},V_{\text{val}},V_{\text{test}} \subset V \) the training, validation and test sets, respectively.

\subsection{Heat Diffusion and GCNs}
The heat diffusion equation in a graph and its unit-\new{timestep} Euler discretisation are
\begin{equation}\label{eq:heatdiff}
    \mb{\dot X}(t)=-\triangle_0\mb X(t) \quad \rightsquigarrow\quad \mb X(t+1)=(\mb I-\triangle_0)\mb X(t),
\end{equation}
where $\triangle_0:=\mb I-\mb{D^{-\frac{1}{2}}AD^{-\frac{1}{2}}}$ is the normalized graph Laplacian.

Most GNNs follow the message-passing paradigm, in which each node iteratively updates its feature representation by aggregating information from its neighbours. Numerous architectures are built upon this general framework, with Graph Convolutional Networks (GCNs) \cite{kipf2016semi} being one of the most popular examples. GCNs implement message passing at each layer as:
\begin{equation}
    \mb H^{(l)}=\sigma(\mb{\tilde D}^{-\frac{1}{2}}\mb{\tilde A}\mb{\tilde D}^{-\frac{1}{2}}\mb H^{(l-1)} \new{\mb M}_0^{(l)})=\sigma((\mb I-\tilde\triangle_0)\mb H^{(l)}\new{\mb M}_0^{(l)}),
\end{equation}
where $\mb H^{(l)}\in\R^{n\times k_l}$ is the matrix of node representations at layer $l$, $\mb \new{M}_0^{(l)}\in\R^{k_{l-1}\times k_{l}}$ is a learnable weight matrix, and $\sigma$ denotes a non-linear activation function. $\mb{\tilde A}=\mb A+\mb I$, $\mb{\tilde D}=\mb D+\mb I$ are the augmented adjacency and degree matrices, and $\tilde\triangle_0=\mb I-\mb{\tilde D^{-\frac{1}{2}}\tilde A\tilde D^{-\frac{1}{2}}}$ is the augmented graph Laplacian.
Therefore, GCN can be seen as an augmented heat diffusion process with an additional $k_{l-1}\times k_{l}$ learnable weight matrix $\new{\mb M}_0^{(l)}$ and a non-linearity $\sigma$. 

\citet{li2018deeper} were the first to highlight the oversmoothing problem, showing that repeatedly applying Equation~\eqref{eq:heatdiff} causes the features of nodes within the same connected component to converge to identical values. Building on this observation, \citet{oono2020graph} and \citet{cai2020note} demonstrate that, under certain assumptions on the weight matrices $\new{\mb M}_0^{(l)}$, the expressive power of GCNs decays exponentially as the number of layers increases.

\subsection{Random Walks on Graphs With Complex Weights}
In this section, we present the main concepts regarding complex-weighted graphs, which form the basis for studying the expressive power of the complex-weighted diffusion process, detailed in Section~\ref{sec:expressive-power}. 
\begin{definition}
    A complex-weighted graph is a graph where each edge is assigned a complex number, $G=(V,E,\mb{W})$ where $V=\{v_1,\dots,v_n\}$ is the node set, $E\subset\{e_{ij}=(v_i,v_j):v_i,v_j\in V\}$ is the edge set and $\mb{W}=(W_{ij})$ is the complex weight matrix, with $W_{ij}\in\C$ characterizing the edges between nodes.
\end{definition}

\begin{remark}
    We assume the complex-weighted graphs to be connected and directed\footnote{Note that we work with undirected unweighted graphs, but directed complex-weighted graphs.}. In addition, we assume $\mb{W}$ to be Hermitian, i.e., $\mb{W}=\mb{W}^*$.
\end{remark}

\begin{definition}\label{def:complex-matrices}
We define the following matrices in a complex-weighted graph:
\begin{enumerate}
    \item The complex degree matrix $\mb{Q}$ is the diagonal matrix with elements $q_i=\sum_j|W_{ij}|$.
    \item The complex transition matrix $\mb P=\mb{Q}^{-1}\mb{W}$.
    \item The complex random walk Laplacian $\mb{L}_{rw}=\mb{I}-\mb P$, where $\mb{I}$ is the identity matrix.
\end{enumerate}
\end{definition}
We can now use these matrices to define a complex random walk.
\begin{definition}\label{def:complex_rw}
    Let $G=(V,E,\mb{W})$ be a complex-weighted graph with $f$ feature channels, where node features are represented as a matrix $\mb{X}\in\C^{n\times f}$. We define a complex random walk as the diffusion process governed by the following PDE:
    \begin{equation}\label{eq:rw}
        \mb{\dot X}(t)=-\mb{L}_{rw}\mb{X}(t).
    \end{equation}
    Thus, its Euler discretization with a unit step is:
    \begin{equation}\label{eq:rw_eulerDiscr}
        \mb{X}(t+1)=(\mb{I}-\mb{L}_{rw})\mb{X}(t)=\mb{P}\mb{X}(t).
    \end{equation}
\end{definition}
An important class of complex-weighted graphs is that of structurally balanced graphs, where the sum of the edge phases along any cycle is an integer multiple of $2\pi$ \cite{Lange2015MagL,tiancomplexnet}. These graphs can be characterized in terms of a partition of the node set, and their asymptotic behavior under a complex random walk has been analyzed in the infinite-time limit \cite{tiancomplexnet}. Building on these results, we show in the next section that any node-classification task can be solved in the steady state of a complex random walk, provided the graph is equipped with a suitable complex-weighted structure defined by the task. We further prove that such weights always exist (see Theorem~\ref{cor:linearSep}), establishing a general expressiveness guarantee for our framework. Additional details on balanced graphs and their properties are provided in Appendix \ref{app:cw-graphs} or can be found in \cite{tiancomplexnet}.

\section{The Expressive Power of Complex-Weighted Diffusion} \label{sec:expressive-power}
\new{\citet{bodnar2022neural} showed that performing diffusion in a real-weighted graph is not sufficient to guarantee expressivity in an arbitrary classification task.} In this section, we prove the full expressive power of complex-weighted diffusion, which constitutes the main result of this paper.  

\begin{restatable}{theorem}{MainCorollary}\label{cor:linearSep}
    Let $G=(V,E)$ be an unweighted, undirected graph and $\mathcal{V}=\{V_i\}_{i=1}^{l_p}$ a partition of its nodes. Then, there exists a complex-weighted graph $G'=(V,E,\mb W)$ such that, starting from any initial features $\mb x(0)=(x_i(0))$, in the steady state of a complex random walk, the features of the nodes belonging to a subset $V_l$ have the same phase, and this phase is different for each subset. 
\end{restatable}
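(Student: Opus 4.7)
The plan is to explicitly construct a Hermitian weight matrix $\mb{W}$ encoding the partition $\mathcal{V}$, verify that the resulting graph is structurally balanced, and then reduce the complex random walk to the ordinary one so that the steady-state phases can be read off directly.

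First, I would pick $l_p$ pairwise distinct phases $\theta_1,\dots,\theta_{l_p}\in[0,2\pi)$, one for each class of $\mathcal{V}$. Writing $l(i)$ for the index of the class containing $v_i$, I define the weights on each edge $\{v_i,v_j\}\in E$ by $W_{ij}=e^{i(\theta_{l(j)}-\theta_{l(i)})}$ and $W_{ji}=\overline{W_{ij}}$, so $\mb{W}$ is Hermitian and $|W_{ij}|=A_{ij}$, which forces $\mb{Q}=\mb{D}$. Around any cycle the phase increments telescope to $0$, so $G'$ is structurally balanced and the framework of \cite{tiancomplexnet} applies.

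The key algebraic observation is that setting the diagonal unitary $\mb{S}=\mathrm{diag}(e^{-i\theta_{l(i)}})$ gives $\mb{W}=\mb{S}\mb{A}\mb{S}^{-1}$, and hence
$$\mb{P}=\mb{D}^{-1}\mb{W}=\mb{S}\,(\mb{D}^{-1}\mb{A})\,\mb{S}^{-1}, \qquad \mb{L}_{rw}=\mb{S}\,(\mb{I}-\mb{D}^{-1}\mb{A})\,\mb{S}^{-1}.$$
Thus the complex random walk on $G'$ is conjugate, via $\mb{S}$, to the ordinary random walk on $G$. The latter has a one-dimensional $\lambda=1$ eigenspace spanned by $\mb{1}$ on a connected graph, so the continuous-time steady state of the former is $\mb{x}(\infty)=c\,\mb{S}\mb{1}$ for some complex scalar $c$ depending on $\mb{x}(0)$. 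In components this reads $x_i(\infty)=c\,e^{-i\theta_{l(i)}}$, which has a common phase $\arg(c)-\theta_l$ on every $V_l$ and pairwise distinct phases across classes by the choice of the $\theta_l$'s.

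I expect the main obstacle to lie in the degenerate situations. The scalar $c$ is essentially the projection of $\mb{x}(0)$ onto the dominant left eigenspace of $\mb{P}$ and can vanish on a measure-zero subset of initial conditions, which must be excluded by a genericity hypothesis already implicit in \cite{tiancomplexnet}. If one works with the discrete Euler iteration $\mb{x}(t+1)=\mb{P}\mb{x}(t)$ rather than the continuous PDE, the presence of bipartite components introduces an additional eigenvalue of modulus one; this standard annoyance is neutralized by passing to a lazy walk or adding self-loops, without affecting the phase structure of the dominant mode.
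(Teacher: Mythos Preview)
Your proof is correct and in fact streamlines the paper's argument. Both approaches end up assigning the same weights: the paper's Proposition~\ref{th:existence-balanced} builds a super-graph on the classes, fixes $\tilde W_{1i}=e^{i\theta_i}$, and then runs a cycle-basis argument to force every triangle (hence every cycle) to have phase $0$; if you unwind that construction you recover precisely your closed form $W_{ij}=e^{i(\theta_{l(j)}-\theta_{l(i)})}$. The difference is in how the steady state is obtained. The paper adds self-loops to kill bipartiteness and then invokes Proposition~\ref{th:convergente-rw} from \cite{tiancomplexnet} as a black box. You instead make the gauge transformation $\mb P=\mb S\,(\mb D^{-1}\mb A)\,\mb S^{-1}$ explicit and read the limit off from the ordinary random walk, which is more self-contained and makes transparent \emph{why} balanced complex weights cannot do anything a real walk cannot do up to a diagonal phase twist. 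Your treatment of the two edge cases (the measure-zero set where $c=0$, and bipartiteness for the discrete iteration) matches what the paper needs as well; note that the paper's own steady-state formula~\eqref{eq:steadystate} also collapses when $\mb x(0)^T\tilde{\mathbf 1}_1=0$, so the ``any initial features'' in the theorem statement should be read generically in both proofs.
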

\begin{proof}[Proof sketch]
    \new{The proof builds upon the results on complex-weighted graphs developed by \citet{tiancomplexnet}, summarized in Appendix~\ref{app:cw-graphs}. According to Proposition~\ref{th:charact-balanced}, the partition $\mathcal{V}$ characterizes a structurally balanced graph satisfying the following properties: (i) any edge within each node subset in $\mathcal{V}$ has phase 0; (ii) any edge between the same pair of node subsets in $\mathcal{V}$ has the same phase; and (iii) if we define the graph $\tilde G$ considering each node subset in $\mathcal{V}$ as a super node, the phase of every cycle in $\tilde G$ is 0. In addition, Proposition~\ref{th:convergente-rw} characterizes the steady state of a complex random walk on any structurally balanced graph in terms of its associated partition.}

    \new{To prove the theorem, we first show that for any given partition $\mathcal{V}$, it is possible to assign complex weights to the edges of the graph $G$ so that conditions (i)–(iii) above are satisfied. The proof of this result proceeds by constructing a super graph $\tilde G$ that determines the complex weight corresponding to each pair of subsets of $\mathcal{V}$ such that (iii) holds. Specifically, we first build a cycle basis of $\tilde G$ composed of triangles and verify that each element of the basis satisfies (iii). We then extend this property to any other triangle and, consequently, to all cycles in $\tilde G$. This construction determines the desired complex-weighted graph $G'$.}

    \new{Finally, applying Proposition~\ref{th:convergente-rw} to the obtained graph $G'$ shows that, in the long time limit of a complex random walk, the features of nodes belonging to different subsets of $\mathcal{V}$ converge to complex numbers with different phases.}
    The full proof can be found in Appendix~\ref{app:cw-graphs}.
\end{proof}
Hence, as in Figure \ref{fig:diffusion}, every graph can be assigned complex weights such that in the steady state of a complex random walk, only nodes belonging to the same class have the same phase.
Then, the nodes' features are linearly separable in the asymptotic time limit, which shows that solving any node classification task can be reduced to performing diffusion with the right complex-weighted structure.

\section{Complex-Weighted Convolutional Networks}
Analogously to how GCN augments heat diffusion, we build a Complex-Weighted Convolutional Network (CWCN) that augments the complex random walk diffusion process. In addition, we propose a method to learn the complex-weighted structure from data. 
\subsection{Complex-Weighted Convolutions}\label{sec:convolutions}
Let $G$ be a complex-weighted graph with initial feature matrix $\mb X(0)\in\C^{n\times f}$,  complex weight matrix $\mb W\in\C^{n\times n}$, degree matrix $\mb Q\in \R^{n\times n}$ and recall $\mb P=\mb Q^{-1}\mb W$, $\mb L_{rw}=\mb I-\mb P$. We equip the diffusion process given by Equation~\eqref{eq:rw} with learnable weight matrices $\new{\mb M}_1\in\C$, $\new{\mb M}_2\in\C^{f\times f}$ and a non-linearity $\sigma$ to arrive at a model whose evolution is governed by:
\begin{equation}\label{cwcn:diffusion}
    \mb{\dot X}(t)=-\sigma(\mb L_{rw}(\mb I_n\otimes\new{\mb M}_1)\mb X(t)\new{\mb M}_2)=-\sigma( (\mb I-\mb P)(\mb I_n\otimes\new{\mb M}_1)\mb X(t)\new{\mb M}_2),
\end{equation}
where $f$ is the number of channels and $\otimes$ denotes the Kronecker product. $\new{\mb M}_1$ \new{applies a rotation to the node features in all the channels}, while $\new{\mb M}_2$ allows mixing each node's feature channels. Note that by setting $\new{\mb M}_1$, $\new{\mb M}_2$ and $\sigma$ to the identity, we recover the complex random walk diffusion equation. Therefore, the model is at least as expressive as complex random walk diffusion and can benefit from the linear separation power described in Theorem~\ref{cor:linearSep}. We focus on the time-discretised version of this model which allows us to use a new set of learnable weight matrices $\new{\mb M}_1^l\in\C, \new{\mb M}_2^l\in\C^{f\times f}$ at each layer $l=0,\dots,L-1$:
\begin{equation}\label{eq:layerOp}
    \mb{ X}(l+1)= \mb{ X}(l)-\sigma( (\mb I-\mb P)(\mb I_n\otimes\new{\mb M}_1^l)\mb X(l)\new{\mb M}_2^l)\in\C^{n\times f}.
\end{equation}
Since the formulation in Equation~\eqref{eq:layerOp} requires the initial feature matrix $\mb X(0)\in \C^{n\times f}$, we first use a multilayer perceptron (MLP) to map the raw node features to a real-valued matrix $\mb X(0)\in \R^{n\times 2f}$, which we then interpret as a complex matrix in $\C^{n\times f}$. Similarly, we interpret the final representation $\mb X(L)$ as a real-valued matrix in $\R^{n\times 2f}$ and apply a final MLP to perform node classification.

\subsection{Complex Weights Learning}\label{sec:cw_learning}
In general, an appropriate complex-weighted structure that guarantees linear separability in the time limit of the diffusion process cannot be determined without full knowledge of the node classes.
Therefore, we aim to learn the underlying complex structure from data, which will allow the model to choose the appropriate complex weight matrix $\mb W$ for each node classification task. Specifically, we define the complex weight matrix $\mb W$ as a learnable function of the initial node features $\mb X(0)$:
\begin{equation}
    \mb W=g(G,\mb X(0);\theta),
\end{equation}
where $\theta$ are learnable parameters.
Once we have learned the complex weight matrix $\mb W$, we can obtain $\mb Q$ the degree matrix and compute $\mb P=\mb Q^{-1}\mb W$. 

The complex weight matrix $\mb W \in\C^{n\times n}$ is learned using a parametric matrix-valued function. The weight corresponding to an edge $(v_i,v_j)$, $ W_{ij}$, is computed based on the initial features of nodes $v_i$ and $v_j$, which we denote by $\mb x_i,\mb x_j\in\C^{f}$. Formally, the weight matrix is given by:
\begin{equation}
    W_{ij}=\Phi (\mb x_i,\mb x_j),
\end{equation}
where $\Phi$ is a learnable function satisfying $\Phi (\mb x_i,\mb x_j)=\Phi (\mb x_j,\mb x_i)^*$, since $\mb W$ must be Hermitian. 

In practice, we set $\Phi$ to be an MLP and interpret $\mb x_i$ and $\mb x_j$ as real vectors in $\R^{k}$ with $k=2f$. For brevity, we use the same notation for both complex and real forms, as the intended interpretation will be clear from the context.
Given a pair of nodes $(v_i,v_j)$ with corresponding feature vectors $\mb x_i,\mb x_j\in\R^k$, we define $\Phi(\mb x_i,\mb x_j)$ as follows:
\begin{align}\label{eq:def-phi}
    \Phi(\mb x_i,\mb x_j)=\begin{cases}
        \tilde\sigma(\mb V[\mb x_i || \mb x_j]), & \text{if }(v_i,v_j)\in E\text{ and }i\leq j\\
        \overline{\tilde\sigma(\mb V[\mb x_i || \mb x_j]))}, & \text{if }(v_i,v_j)\in E\text{ and }i> j\\
        0 & \text{otherwise}
    \end{cases}
\end{align}
Here, $||$ denotes vector concatenation, $\mb V\in\R^{2\times 2k}$ is a learnable weight matrix and $\tilde\sigma$ is a non-linear function. The overline $\overline{(a,b)}$ denotes the complex conjugate when interpreting $(a,b)\in\R^2$ as a complex number, i.e., $\overline{(a,b)}=(a,-b)$. Finally, we interpret $\Phi(\mb x_i,\mb x_j)\in\R^2$ as a complex number.

The following result shows that if the function $\Phi$ has enough
capacity and the features are diverse enough, we can learn any Hermitian complex weight matrix.

\begin{restatable}{proposition}{PropLearnW}\label{prop:universality}
    Let $G=(V,E)$ be a finite graph with feature matrix $\mb X\in\C^{n\times f}$. If the node features $(\mb x_i,\mb x_j)\neq (\mb x_k,\mb x_t)$ for any $(v_i,v_j)\neq(v_k,v_t)\in E$ and $\Phi$ defined in \eqref{eq:def-phi} is an MLP with sufficient capacity, then $\Phi$ can learn any complex-weighted structure $G'=(V,E,\mb W)$.
\end{restatable}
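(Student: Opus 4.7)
The plan is to reduce Proposition~\ref{prop:universality} to a standard finite-interpolation consequence of the universal approximation theorem. The edge set $E$ is finite, so only finitely many values $W_{ij}$ must be matched; under the hypothesis that $(\mathbf{x}_i,\mathbf{x}_j)\neq(\mathbf{x}_k,\mathbf{x}_t)$ for distinct edges $(v_i,v_j)\neq(v_k,v_t)$, the concatenated inputs $[\mathbf{x}_i \,\|\, \mathbf{x}_j]\in\mathbb{R}^{2k}$ supplied to $\Phi$ are pairwise distinct, so prescribing a target value at each such point defines a consistent finite-data interpolation problem.

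First I would fix an arbitrary target complex-weighted structure $G'=(V,E,\mathbf{W})$ with $\mathbf{W}$ Hermitian and supported on $E$. The non-edge case is immediate: equation~\eqref{eq:def-phi} already returns $0$, matching $W_{ij}=0$ whenever $(v_i,v_j)\notin E$. For directed edges $(v_i,v_j)$ with $i>j$, the conjugation branch in~\eqref{eq:def-phi}, combined with Hermiticity $W_{ij}=W_{ji}^*$, ensures the correct output is produced as soon as the $i\leq j$ branch realizes $W_{ji}$. Hence the problem reduces to choosing the MLP component $\tilde\sigma(\mathbf{V}[\cdot])$ so that, on the finite collection of inputs $\{[\mathbf{x}_i\,\|\,\mathbf{x}_j]:(v_i,v_j)\in E,\;i\leq j\}$, it outputs the prescribed $\mathbb{R}^2$-valued targets $(\mathrm{Re}\,W_{ij},\,\mathrm{Im}\,W_{ij})$.

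Second, I would invoke finite-sample interpolation for MLPs. By classical universal approximation results (e.g., Cybenko, Hornik, Leshno et al.), a feed-forward network with a non-polynomial activation and sufficient width can realize \emph{exact} interpolation of any finite set of input-output pairs with distinct inputs: one constructs localized ``bump-like'' hidden units around each input vector and takes an appropriate linear readout. Interpreting $\mathbf{V}$ and $\tilde\sigma$ in~\eqref{eq:def-phi} as shorthand for a feed-forward network of adequate depth and width (the meaning of ``sufficient capacity''), distinctness of the concatenated feature pairs is precisely the hypothesis that makes this construction well-defined, yielding an exact match of $\Phi$ to $\mathbf{W}$ on the upper-triangular part of $E$, and hence on all of $V\times V$ by the case split.

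The only real subtlety --- what I would flag as the main obstacle --- is reconciling the Hermiticity constraint $\Phi(\mathbf{x}_i,\mathbf{x}_j)=\Phi(\mathbf{x}_j,\mathbf{x}_i)^*$ with the functional form of $\Phi$, since swapping the arguments of the MLP is not naturally related to complex conjugation of its output. The case-split definition in~\eqref{eq:def-phi} sidesteps this structurally: the MLP is only ever fit on the $i\leq j$ half of the edges and is free of any symmetry constraint, while the $i>j$ half inherits the correct value through the explicit conjugate. This decoupling is what makes the reduction to standard finite interpolation clean, and avoids any need to impose architectural symmetries on the underlying network.
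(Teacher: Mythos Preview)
Your argument is correct and shares the same core idea as the paper's proof: since the edge set is finite and the hypothesis guarantees distinct concatenated inputs, the problem reduces to fitting finitely many input--output pairs with an MLP. The main technical difference is in how this last step is justified. The paper explicitly extends the finitely-defined target map $g\colon S\to\mathbb{R}^2$ to a smooth function on all of $\mathbb{R}^{2k}$ via a bump-function construction, and then invokes Hornik's universal approximation theorem for smooth targets; you instead appeal directly to exact finite-sample interpolation by wide MLPs. Your route is arguably more direct and avoids the detour through smooth extension, though the paper's version makes the connection to the classical approximation literature more explicit. You are also more careful than the paper about the case split in~\eqref{eq:def-phi}: the paper defines $g$ on \emph{all} ordered edge pairs and only at the very end asserts that $\Phi$ therefore realizes $\mathbf{W}$, without spelling out how the $i>j$ conjugation branch interacts with the learned MLP, whereas you correctly observe that it suffices to fit the MLP on the $i\le j$ half and let Hermiticity handle the rest.
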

\begin{proof}[Proof sketch]
    \new{Define the set $S:=\{(\mb x_i,\mb x_j): (v_i,v_j)\in E\}\subset\C^{2f}$ and the function $g\colon S\to\C$, $g(\mb x_i,\mb x_j)=W_{ij}$. Interpret $S$ as a subset of $\R^{2k}$, with $k=2f$, and $g\colon S\to\R^{2}$. First, we show that $g$ can be extended to a smooth function $f\colon\R^{2k}\to\R^{2}$. Then, $f$ can be approximated by an MLP with sufficient capacity \cite{hornik1991approximation,hornik1989multilayer} and, thus, so does $g$.
    Therefore, $\Phi$ defined in Equation~\eqref{eq:def-phi} can learn any complex weight matrix $\mb W$ of the graph $G$.}
    \new{For more details, see Appendix~\ref{app:cw-learning}.}
\end{proof}

\section{Experiments} 
Having shown theoretically the expressivity of our model, we evaluate it through both synthetic and real-world experiments. The synthetic experiments \new{show the benefits of using complex-weighted diffusion in controlled heterophilic settings}, while the real-world experiments assess performance on benchmark datasets against several baseline models, providing an overview of the model’s capabilities.
\subsection{Synthetic Experiments}\label{fig:synthetic_exp_main}
\begin{figure}[]
    \centering
    \begin{subfigure}[b]{0.49\textwidth}
        \centering
        \includegraphics[width=\linewidth]{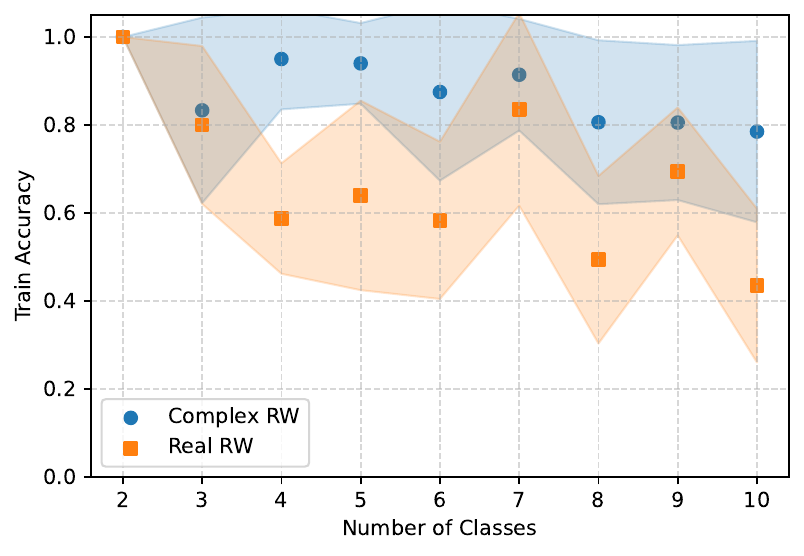}
    \end{subfigure}
    \hfill
    \begin{subfigure}[b]{0.49\textwidth}
        \centering
        \includegraphics[width=\linewidth]{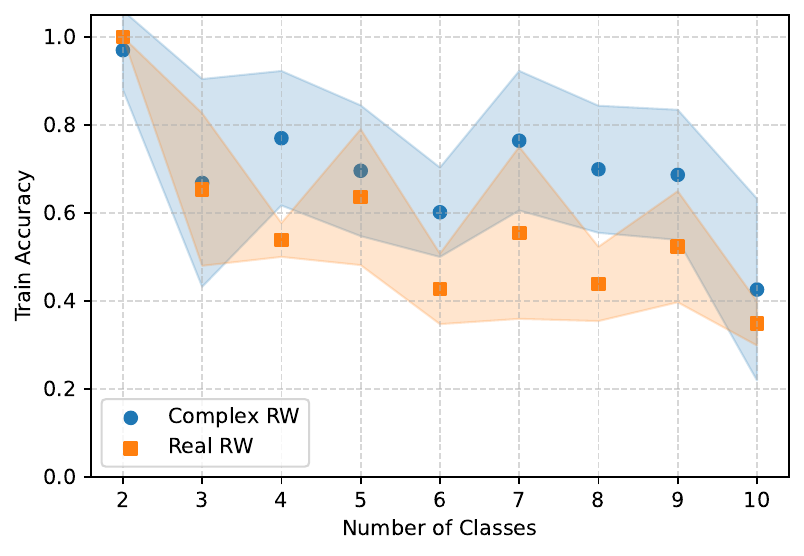}
    \end{subfigure}

    \vskip\baselineskip
    \begin{subfigure}[b]{0.45\textwidth}
        \centering
        \includegraphics[width=\linewidth]{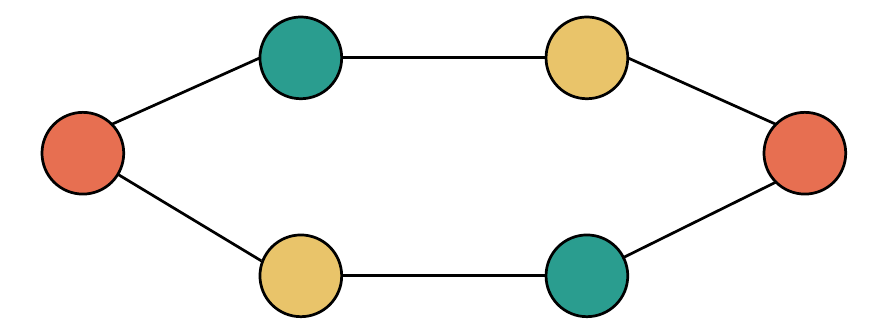}
    \end{subfigure}
    \hfill
    \begin{subfigure}[b]{0.45\textwidth}
        \centering
        \includegraphics[width=\linewidth]{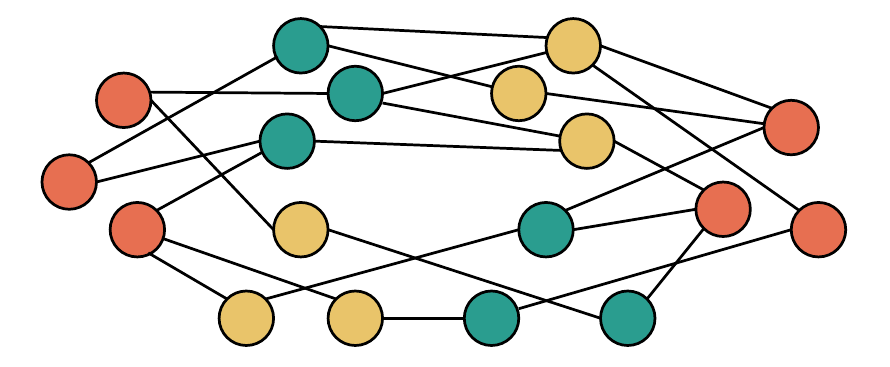}
    \end{subfigure}

    \caption{\new{Training accuracy (top) and examples of the processed graphs (bottom) for learnable complex versus real random walks, varying the numbers of classes in two heterophilic settings: a cycle graph with same-class nodes in opposite positions (left) and a ring of clusters with same-class clusters in opposite positions (right). The complex random walk consistently achieves higher mean accuracy (dots) than its real-valued counterpart (shade: standard deviation over 10 random seeds).}}
\end{figure}
\new{These experiments constitute an ablation study aimed at assessing whether complex-weighted diffusion provides advantages in heterophilic settings compared to its real-weighted counterpart, as suggested by our theoretical analysis. A real random walk can be defined analogously to Definitions~\ref{def:complex-matrices} and~\ref{def:complex_rw} (see \cite{tiansignednet} for details). While \citet{bodnar2022neural} demonstrate that real-valued diffusion fails to separate classes in the time limit for certain tasks, Theorem~\ref{cor:linearSep} establishes that complex diffusion can achieve linear separation in its steady state. To isolate the effect of complex weights, we evaluate learnable vanilla real and complex random walks by setting $\new{\mb M}_1 = 1$, $\new{\mb M}_2 = \mb I_f$, and $\sigma = \operatorname{id}$ in Equation~\eqref{eq:layerOp}, so that only the weight matrix $\mb W$ is trainable. To ensure a fair comparison, we use a single feature channel ($f=1$) for the complex case and two channels for the real case, ensuring that both models have the same number of learnable parameters. Finally, to analyse the asymptotic behaviour, we set the number of layers to 20.}

\new{We first consider a cycle graph containing two nodes per class, where nodes of the same class are placed at opposite positions on the cycle (Figure~\ref{fig:synthetic_exp_main}, bottom-left) and node features are sampled from distinct class-specific Gaussian distributions. This setup provides a clear example of heterophily: nodes of the same class are distant in the graph topology, and information must diffuse through an increasing number of intermediate nodes, proportional to the number of classes, before reaching another node of the same class.}

\new{Next, we design a more challenging heterophilic scenario. We consider a Stochastic Block Model (SBM) with 10 nodes per cluster and 2 clusters per class. The clusters are arranged in a ring topology, where edges are formed only between adjacent clusters, and no intra-cluster connections are present. Clusters positioned opposite to each other on the ring share the same class label and node features are sampled from class-specific Gaussian distributions. Figure~\ref{fig:synthetic_exp_main} (bottom-right) illustrates this configuration for the case of 3 nodes per cluster.}

\new{Figure~\ref{fig:synthetic_exp_main}(top) the classification accuracy for both settings as the number of classes increases, averaged over ten random seeds. Consistent with our theoretical findings, the complex random walk consistently outperforms its real-valued counterpart. Notably, in the two-class setting both methods achieve perfect accuracy, which aligns with the observations of \citet{bodnar2022neural}, who showed that real-weighted diffusion is sufficient when only two classes are present. Overall, these experiments empirically confirm the intrinsic advantage of complex-weighted diffusion in heterophilic graphs, in agreement with our theoretical results.}

\subsection{Real-World Experiments}
Given the conceptual similarity between our model and SCN \cite{bodnar2022neural}, a direct comparison is particularly relevant. To ensure fairness, we follow the same experimental procedure described in \cite{bodnar2022neural}.

\paragraph{Datasets.}
We evaluate our model on several real-world graphs \cite{citnet,geomgcn,wikipedianet}.  
These datasets exhibit varying degrees of edge homophily $h$, with values ranging from $h=0.11$ (very heterophilic) to $h=0.81$ (very homophilic). This diversity allows us to assess our model’s robustness under different homophily conditions.
Following \cite{bodnar2022neural}, we evaluate our model using the 10 fixed data splits provided by \cite{geomgcn}.  For each split, 48\% of the nodes in each class are used for training, 32\% for validation, and the remaining 20\% for testing. We report the mean accuracy and standard deviation across 10 splits.

\paragraph{Baselines.}
As baselines, we include SCN \cite{bodnar2022neural}, along with the same selection of Graph Neural Network (GNN) models used in their study. These baselines can be classified into three categories: (1) classical: GCN \cite{kipf2016semi}, GAT \cite{gat}, GraphSAGE \cite{graphsage}; (2) models designed for heterophilic settings: GGCN \cite{ggcn}, Geom-GCN \cite{geomgcn}, H2GCN \cite{zhu2020beyond}, GPRGNN \cite{GPRGNN}, FAGCN \cite{fagcn}, MixHop \cite{mixhop}; and (3) models designed to address oversmoothing: GCNII \cite{gcnii}, PairNorm \cite{zhao2020pairnorm}. All baseline results are reported as presented in \cite{bodnar2022neural}. For SCN \cite{bodnar2022neural}, we select O(d)-NSD, the variant that achieves the best average performance. Finally, we include two additional diffusion-based models: BLEND \cite{blend} and GREAD \cite{choi2023gread} (selecting the variant with best average performance). \cite{choi2023gread}.

 \paragraph{Results.}The results are summarized in Table~\ref{tab:experiments}. First, CWCN significantly outperforms classical GNNs on heterophilic datasets, supporting our theoretical claims regarding CWCN’s improved expressivity over GCNs and demonstrating that these advantages translate into practical performance gains. On homophilic datasets, our model also generally performs better, although the margins are smaller. Second, CWCN remains competitive across all datasets, with its performance deviating by an average of 3.52\% from the best-performing model (4.84\% for heterophilic datasets and 1.31\% for homophilic datasets). Overall, CWCN ranks $4^{th}$ in average empirical performance among all evaluated models while providing provable expressiveness guarantees for an infinite number of layers without additional constraints.
 
\definecolor{darkorange}{RGB}{151,110,0}
\renewcommand{\arraystretch}{1.2}
\begin{table*}
\caption{Accuracy results (mean test accuracy $\pm$ standard deviation) on node classification datasets, sorted by homophily level. The top four models are highlighted in \textcolor{red}{\textbf{First}}, \textcolor{blue}{\textbf{Second}}, \textcolor{violet}{\textbf{Third}}, \textcolor{darkorange}{\textbf{Fourth}}. The background color of the model name: {\setlength{\fboxsep}{0.2pt} \colorbox{green!40}{green}} for models that provide expressive power guarantees for an infinite number of layers, {\setlength{\fboxsep}{0.2pt} \colorbox{yellow!60}{yellow}} for models that provide them only under certain constraints, and {\setlength{\fboxsep}{0.2pt} \colorbox{gray!25}{grey}} for models without expressiveness guarantees. Our model is denoted as CWCN, and the gap to the best model is computed in \% as $\frac{\text{Acc}_{\text{bestModel}}-\text{Acc}_{\text{CWCN}} }{\text{Acc}_{\text{bestModel}}} \cdot 100$. Table adapted and modified from \cite{bodnar2022neural}.}
\label{tab:experiments}
\resizebox{\textwidth}{!}{%

\centering

\begin{tabular}{lcccccccccc}
\toprule
& \textbf{Texas} & \textbf{Wisconsin} & \textbf{Film} & \textbf{Chameleon} & \textbf{Cornell} & \textbf{Citeseer} & \textbf{Pubmed} & \textbf{Cora}& \textbf{Avg.}\\
\midrule
Hom level & \textbf{0.11} & \textbf{0.21} & \textbf{0.22} & \textbf{0.23} & \textbf{0.30} & \textbf{0.74} & \textbf{0.80} & \textbf{0.81} \\
\#Nodes & 183 & 251 & 7,600 & 2,277 & 183 & 3,327 & 18,717 & 2,708 &\\
\#Edges & 295 & 466 & 26,752 & 31,421 & 280 & 4,676 & 44,327 & 5,278& \\
\#Classes & 5 & 5 & 5 & 5 & 5 & 6 & 3 & 7& \\
\midrule
\cellcolor{green!40}\textbf{CWCN} & \textcolor{darkorange}{\textbf{84.05}}{\scriptsize $\pm$6.45} & \textcolor{darkorange}{\textbf{86.27}}{\scriptsize $\pm$4.20} & 36.51 {\scriptsize $\pm$1.26}& \textcolor{darkorange}{\textbf{65.59}} {\scriptsize $\pm$1.33} & 83.51{\scriptsize $\pm$8.15} &76.37{\scriptsize $\pm$1.53}  & 89.23 {\scriptsize $\pm$0.49}& \textcolor{darkorange}{\textbf{87.93}}{\scriptsize $\pm$1.03}&\textcolor{darkorange}{\textbf{76.18}} \\
\footnotesize{\shortstack{Gap to best\\model (\%)}}\rule{0pt}{4.5ex}
& 5.48 & 3.51 & 3.67 &  8.11  & 3.44 & 2.11 & 1.11&0.72&3.95\\

\midrule
\cellcolor{gray!25}GREAD & \textcolor{red}{\textbf{88.92}}{\scriptsize $\pm$3.72} & \textcolor{red}{\textbf{89.41}}{\scriptsize $\pm$3.30} & \textcolor{red}{\textbf{37.90}}{\scriptsize $\pm$1.17} & \textcolor{red}{\textbf{71.38}}{\scriptsize $\pm$1.31} & \textcolor{red}{\textbf{86.49}}{\scriptsize $\pm$7.15} & \textcolor{blue}{\textbf{77.60}}{\scriptsize $\pm$1.81} & \textcolor{red}{\textbf{90.23}}{\scriptsize $\pm$0.55} & \textcolor{red}{\textbf{88.57}}{\scriptsize $\pm$0.66}&\textcolor{red}{\textbf{78.81}}\\

\cellcolor{yellow!60}SCN & \textcolor{blue}{\textbf{85.95}}{\scriptsize $\pm$6.95} & \textcolor{red}{\textbf{89.41}}{\scriptsize $\pm$4.74} & \textcolor{blue}{\textbf{37.81}}{\scriptsize $\pm$1.15} & \textcolor{violet}{\textbf{68.04}}{\scriptsize $\pm$1.58} &  \textcolor{darkorange}{\textbf{84.86}}{\scriptsize $\pm$4.71}  & 76.70{\scriptsize $\pm$1.57}& \textcolor{darkorange}{\textbf{89.49}}{\scriptsize $\pm$0.40} & 86.90{\scriptsize $\pm$1.13}&  \textcolor{violet}{\textbf{77.39}}\\

\cellcolor{gray!25}BLEND & 83.24{\scriptsize $\pm$4.64} & 84.12{\scriptsize $\pm$3.56} & 35.63{\scriptsize $\pm$1.01} &  60.11{\scriptsize $\pm$2.09} & \textcolor{blue}{\textbf{85.95}}{\scriptsize $\pm$6.82} & 76.63{\scriptsize $\pm$1.60} & 89.24{\scriptsize $\pm$0.42} & 86.90{\scriptsize $\pm$1.13}& 75.23\\

\cellcolor{gray!25}GGCN & \textcolor{violet}{\textbf{84.86}}{\scriptsize $\pm$4.55} & \textcolor{violet}{\textbf{86.86}}{\scriptsize $\pm$3.29} &  \textcolor{violet}{\textbf{37.54}}{\scriptsize $\pm$1.56} & \textcolor{blue}{\textbf{71.14}}{\scriptsize $\pm$1.84} & \textcolor{violet}{\textbf{85.68}}{\scriptsize $\pm$6.63} &\textcolor{darkorange}{\textbf{77.14}}{\scriptsize $\pm$1.45} & 89.15{\scriptsize $\pm$0.37} & \textcolor{violet}{\textbf{87.95}}{\scriptsize $\pm$1.05}& \textcolor{blue}{\textbf{77.54}} \\

\cellcolor{gray!25}H2GCN & \textcolor{violet}{\textbf{84.86}}{\scriptsize $\pm$7.23} & \textcolor{blue}{\textbf{87.65}}{\scriptsize $\pm$4.98} & 35.7{\scriptsize $\pm$1.00} & 60.11{\scriptsize $\pm$2.15} & 82.70{\scriptsize $\pm$5.28} & 77.11{\scriptsize $\pm$1.57} & 88.49{\scriptsize $\pm$0.38} & 87.87{\scriptsize $\pm$1.20}& 75.56 \\

\cellcolor{gray!25}GPRGNN & 78.36{\scriptsize $\pm$4.31} & 82.94{\scriptsize $\pm$4.21} & 34.63{\scriptsize $\pm$1.22}  & 46.58{\scriptsize $\pm$1.71} & 80.27{\scriptsize $\pm$8.11} & 77.13{\scriptsize $\pm$1.67} & 87.54{\scriptsize $\pm$0.38} & \textcolor{violet}{\textbf{87.95}}{\scriptsize $\pm$1.18}& 71.92\\

\cellcolor{gray!25}FAGCN & 82.43{\scriptsize $\pm$6.89} & 82.94{\scriptsize $\pm$7.95} & 34.87{\scriptsize $\pm$1.25} & 55.22{\scriptsize $\pm$3.19} & 79.19{\scriptsize $\pm$9.79} & N/A & N/A & N/A& -\\

\cellcolor{gray!25}MixHop & 77.84{\scriptsize $\pm$7.73} & 75.88{\scriptsize $\pm$4.90} & 32.22{\scriptsize $\pm$2.34} &  60.50{\scriptsize $\pm$2.53} & 73.51{\scriptsize $\pm$6.34} & 76.26{\scriptsize $\pm$1.33} & 85.31{\scriptsize $\pm$0.61} & 87.61{\scriptsize $\pm$0.85}& 71.14\\

\cellcolor{gray!25}GCNII & 77.57{\scriptsize $\pm$3.83} & 80.39{\scriptsize $\pm$3.40} &  \textcolor{darkorange}{\textbf{37.44}}{\scriptsize $\pm$1.30} & 63.86{\scriptsize $\pm$3.04} & 77.86{\scriptsize $\pm$3.79} & \textcolor{violet}{\textbf{77.33}}{\scriptsize $\pm$1.48} & \textcolor{blue}{\textbf{90.15}}{\scriptsize $\pm$0.43} & \textcolor{blue}{\textbf{88.37}}{\scriptsize $\pm$1.25}& 74.12\\

\cellcolor{gray!25}Geom-GCN & 66.76{\scriptsize $\pm$2.72} & 64.51{\scriptsize $\pm$3.66} & 31.59{\scriptsize $\pm$1.15} &  60.00{\scriptsize $\pm$2.81} & 60.54{\scriptsize $\pm$3.67} & \textcolor{red}{\textbf{78.02}}{\scriptsize $\pm$1.15} & \textcolor{violet}{\textbf{89.95}}{\scriptsize $\pm$0.47} &  85.35{\scriptsize $\pm$1.57}&67.09\\

\cellcolor{gray!25}PairNorm & 60.27{\scriptsize $\pm$4.34} & 48.43{\scriptsize $\pm$6.14} & 27.40{\scriptsize $\pm$1.24} &  62.74{\scriptsize $\pm$2.82} & 58.92{\scriptsize $\pm$3.15} & 73.59{\scriptsize $\pm$1.47}  & 87.53{\scriptsize $\pm$0.44}  & 85.79{\scriptsize $\pm$1.01} & 63.08\\

\cellcolor{gray!25}GraphSAGE & 82.43{\scriptsize $\pm$6.14} & 81.18{\scriptsize $\pm$5.56} & 34.23{\scriptsize $\pm$0.99} & 58.73{\scriptsize $\pm$1.68} & 75.95{\scriptsize $\pm$5.01} & 76.04{\scriptsize $\pm$1.30} & 88.45{\scriptsize $\pm$0.50} & 86.90{\scriptsize $\pm$1.04}& 72.99 \\

\cellcolor{gray!25}GCN & 55.14{\scriptsize $\pm$5.16} & 51.76{\scriptsize $\pm$3.06} & 27.32{\scriptsize $\pm$1.10} &  64.82{\scriptsize $\pm$2.24} & 60.54{\scriptsize $\pm$5.30} & 76.50{\scriptsize $\pm$1.36} & 88.42{\scriptsize $\pm$0.50} & 86.98{\scriptsize $\pm$1.27}& 63.93\\

\cellcolor{gray!25}GAT & 52.16{\scriptsize $\pm$6.63} & 49.41{\scriptsize $\pm$4.09} & 27.44{\scriptsize $\pm$0.89} & 60.26{\scriptsize $\pm$2.50} & 61.89{\scriptsize $\pm$5.05} & 76.55{\scriptsize $\pm$1.23} & 87.30{\scriptsize $\pm$1.10} & 86.33{\scriptsize $\pm$0.48}& 62.67\\

\bottomrule
\end{tabular}
}

\end{table*}

\section{Discussion and Related Work}

\paragraph{Heterophily and Oversmoothing.} Heterophilic graphs challenge the homophily assumption underlying many GNNs. To address this, several strategies have been proposed. MixHop \cite{mixhop} aggregates information from higher-order neighbourhoods to capture long-range dependencies; Geom-GCN \cite{geomgcn} redefines the notion of neighbourhood; FAGCN \cite{fagcn}, H2GCN \cite{zhu2020beyond} and GGCN \cite{ggcn}  model the relative importance of neighbours during aggregation; and  GPRGNN \cite{GPRGNN} integrates representations from multiple layers to jointly leverage local and global structural information.

To mitigate oversmoothing, a variety of methods have been proposed. Architecture-agnostic techniques include residual or skip connections to preserve information flow across layers \cite{li2019deepgcns, xu2018jumpknow}, normalization methods to limit feature homogenization \cite{zhao2020pairnorm}, and graph rewiring to enhance connectivity \cite{rong2019dropedge}. Architectures such as GCNII \cite{gcnii} and PairNorm \cite{zhao2020pairnorm} exemplify these approaches.

 These methods offer practical mechanisms to address heterophily and oversmoothing. While many include theoretical analyses that highlight the models’ advantages, they generally lack theoretical guarantees regarding node features separability as the number of layers increases. In contrast, CWCN not only ensures such guarantees but also achieves superior empirical performance in heterophilic settings compared to most of these models, being only slightly outperformed by GGCN \cite{ggcn}.
 
 \paragraph{Diffusion on GNNs.} More recently, several approaches jointly address oversmoothing and heterophily by modifying the underlying message-passing dynamics \cite{bodnar2022neural, maskey2023fractional, choi2023gread,blend}. A notable example is the Sheaf Convolutional Network (SCN), introduced by \citet{hansen2020sheaf} and later extended into a practical learning framework by \citet{bodnar2022neural}. SCNs increase the expressive power of heat diffusion by equipping the graph with a \textit{cellular sheaf} \cite{curry2014sheaves}, enabling a diffusion process based on the sheaf Laplacian. In this setup, the sheaf structure is learned from data, and sheaf diffusion is augmented to build a GNN architecture. Another prominent approach is the Graph Neural Reaction-Diffusion Network (GREAD) \cite{choi2023gread}, which models feature propagation through reaction-diffusion equations. Since our method also introduces a novel diffusion process to redefine message passing, it naturally belongs to this family.
 
 While GREAD achieves state-of-the-art results on standard node-classification benchmarks, it does not provide formal theoretical guarantees. SCNs, in contrast, offer provable expressiveness: \citet{bodnar2022neural} show that, in the time limit of sheaf diffusion, any node classification task can theoretically be solved, provided the graph is equipped with an appropriate sheaf structure. However, these guarantees rely on the sheaf dimension scaling with the number of target classes. Without such scaling, sheaf diffusion only guarantees linear separation power for regular graphs. Importantly, increasing the sheaf dimension introduces additional learnable parameters and enlarges the diffusion matrix, leading to higher computational cost.
Furthermore, compared to standard GNNs, GREAD requires multiple additional hyperparameters to parameterize the reaction–diffusion process, while SCNs add only the sheaf dimension as hyperparameter.
 
 \paragraph{Advantages of CWCNs.} CWCN achieves competitive performance on node-classification benchmarks, surpassing most existing architectures designed to mitigate heterophily and oversmoothing, while performing only slightly below top-performing diffusion-based models. Importantly, unlike other architectures, CWCN achieves linear separation power in the diffusion time limit using a fixed-size matrix that is independent of the number of classes. In addition, it introduces no extra hyperparameters beyond those of standard GNNs. Thus, CWCN provides a simpler formulation that requires less hyperparameter tuning while offering stronger theoretical guarantees.

\paragraph{Limitations of CWCNs.} Despite its theoretical strengths and simplicity, CWCN has two main limitations. First, there remains an empirical performance gap between CWCN and the best-performing methods such as GREAD and SCN. Second, CWCN relies on complex-valued weights, which require additional matrix operations during both training and inference. This introduces overhead in backpropagation and leads to higher computational cost compared to real-valued GNN architectures. 

\new{\paragraph{Complex GNNs.} Incorporating complex values into graph learning has recently attracted attention from various communities, but the existing approaches differ fundamentally from our framework. For instance, CayleyNets \cite{levie2019cayley} extend the ChebNet paradigm by employing Cayley polynomials as spectral filters, which involve learnable complex coefficients; however, the resulting filters remain real-valued, and signals remain consistently real throughout the process. More broadly, in network science, there has been a growing interest in systematically extending classical concepts to complex-weighted networks, e.g., \cite{bottcher2024complex}. This line provides useful foundations but does not design GNN models for downstream applications. Closer to GNNs, MagNet and related works \cite{zhang2021magnet,xu2025multihop} employ a complex Laplacian matrix (e.g., magnetic Laplacian) to capture directed edge information, typically with one or two global phase parameters. In contrast, our framework allows arbitrary complex phases per edge, leading to a richer diffusion process. More importantly, unlike prior studies, we provide theoretical guarantees on the expressivity of our complex-weighted diffusion process. 
This combination of flexible modeling and rigorous theory distinguishes our contribution.
}

\section{Conclusion and Future Work}
In this work, we introduced a novel framework for enhancing the expressive power of GNNs through diffusion on complex-weighted graphs. We first established the full expressive power of complex-weighted diffusion, demonstrating its potential to address two well-known limitations of standard GNN architectures: oversmoothing and poor performance on heterophilic graphs.

Building on this theoretical insight, we proposed CWCN, a GNN architecture that augments complex random-walk diffusion with learnable parameters and nonlinear activations. We further introduced a mechanism to learn the complex-weighted structure directly from data, allowing the model to adaptively capture the most suitable ``geometry'' for a given task. Compared to prior methods, the resulting framework is simpler—requiring fewer hyperparameters—while being supported by stronger theoretical guarantees.

Empirical evaluations on standard node-classification benchmarks show that CWCN achieves competitive performance, although further work is needed to determine whether our model can be refined to close the gap with top-performing methods. Promising directions for future work include investigating alternative architectures based on complex random walks, exploring other types of diffusion processes on complex-weighted graphs, incorporating a reaction term into the complex-weighted diffusion equation or, interpreting complex multiplication as a rotation in two dimension, considering transformations along the walks in higher dimensions \cite{tian2025matrix}. Additionally, reducing the computational overhead associated with complex weights is an important avenue for improvement.

Overall, our findings suggest that incorporating complex-weighted
diffusion provides a powerful approach to designing more expressive GNNs. By enriching the message passing dynamics with a complex-weighted structure, we open new possibilities for designing models that are both theoretically grounded and practically effective. 
To the best of our knowledge, this is the first work to leverage complex
weights to enhance GNN expressiveness, and we hope it inspires further
exploration of their potential in graph-based learning tasks.

\section*{Acknowledgements}
C.L.A.\ acknowledges that the project that gave rise to these results received the support of a fellowship from the “la Caixa” Foundation (ID 100010434). The fellowship code is LCF/BQ/EU24/12060069. T.S.\ acknowledges financial support from the Rhodes Trust and the EPSRC Centre for Doctoral Training in Mathematics of Random Systems: Analysis, Modelling and Simulation (ESPRC Grant EP/S023925/1). R.L.\ acknowledges support from the EPSRC grants EP/V013068/1, EP/V03474X/1 and EP/Y028872/1.

\bibliographystyle{unsrtnat}
\bibliography{reference}

\newpage 
\appendix
\section{Complex Random Walk Proofs}\label{app:cw-graphs}
In this Appendix, we first summarize the main results of \cite{tiancomplexnet} on balanced graphs and their behavior under complex random walks. We then use these results to show that any node classification task can be performed in the time limit of a complex random walk, if the graph is equipped with a suitable complex-weighted structure.

\subsection{Complex Random Walks on Balanced Graphs}
\begin{notation}
    We express elements in the complex weight matrix in polar coordinates as $W_{ij}=r_{ij}e^{i\varphi_{ij}}$, where $r_{ij}\geq 0$ indicates the magnitude and $\varphi_{ij}\in[0,2\pi)$ is the phase. Thus, since $\mathbf{W}$ is Hermitian, $r_{ij}=r_{ji}$ and $\varphi_{ij}=-\varphi_{ji}+2\pi$.
\end{notation}
Next, we define the notion of structural balance in a complex-weighted graph and present necessary and sufficient conditions under which a graph is structurally balanced. To this end, we first introduce the concept of the phase of a path.
\begin{definition}
    Let $G=(V,E,\mathbf{W})$ be a complex-weighted graph and $P=(e_1,\dots,e_k),\quad e_i\in E$ be a path, the phase of $P$  is:
    \[\theta(P):=\sum_{i=1}^k\theta(e_i) \operatorname{mod} 2\pi,\]
    where $\theta(e)$ returns the phase of edge $e$.
\end{definition}
\begin{definition}
    A complex-weighted graph $G=(V,E,\mathbf{W})$ is structurally balanced if the phase of every cycle is 0.
\end{definition}
\begin{remark}
    Since $\mb{W}$ is Hermitian, if a cycle has phase $\theta$, the cycle with reverse direction has phase $2\pi-\theta$. Then, a cycle has phase $0$ iff its reversed cycle has phase 0, thus structural balance is well-defined.
\end{remark}
Next, we state a characterization of structurally balanced graphs, which is illustrated in Figure \ref{fig:balancedGraph}. 
\begin{figure}
    \centering
    \begin{tikzpicture}
    \tikzset{every path/.style={->}}
    \tikzstyle{group1}=[circle, draw=black, fill=Apricot, minimum size=4pt]
    \tikzstyle{group2}=[circle, draw=black, fill=Thistle, minimum size=4pt]
    \tikzstyle{group3}=[circle, draw=black, fill=SkyBlue, minimum size=4pt]

    \node[group1] (A) at (-5, 5.5) {A};
    \node[group1] (B) at (-1, 4.4) {B};
    \node[group1] (C) at (1, 6.5) {C};
    
    \node[group2] (D) at (2, 3.5) {D};
    \node[group2] (E) at (5, 3.5) {E};
    \node[group2] (F) at (5, 5.5) {F};
    
    \node[group3] (G) at (-5, 1.2) {G};
    \node[group3] (H) at (5, 1.2) {H};

    \draw (B) [bend right=10] to  node[pos=0.7, below] {$\scriptstyle 1$} (C);
     \draw (C) [bend right=10] to  node[pos=0.7, above] {$\scriptstyle 1$} (B);
    \draw (C) [bend right=10] to  node[pos=0.5, above] {$\scriptstyle 1$} (A);
    \draw (A) [bend right=10] to  node[pos=0.5, below] {$\scriptstyle 1$} (C);
    
    \draw (E) [bend right=10] to  node[pos=0.5, above] {$\scriptstyle 1$}(D);
    \draw (D) [bend right=10] to  node[pos=0.5, below] {$\scriptstyle 1$}(E);
    \draw (F) [bend right=10] to node[pos=0.6, above] {$\scriptstyle 1$}(D);
    \draw (D) [bend right=10] to node[pos=0.6, below] {$\scriptstyle 1$}(F);
    
    \draw (C) [bend right=10] to node[pos=0.4, below] {$e^{\frac{3\pi i}{7}}$}(F);
    \draw (F) [bend right=10] to node[pos=0.6, above] {$e^{\frac{-3\pi i}{7}}$}(C);
    
    \draw[bend right=20] (A) to node[pos=0.4, left] {$e^{\frac{\pi i}{7}}$}(G);
    \draw[bend right=20] (G) to node[pos=0.7, right] {$e^{-\frac{\pi i}{7}}$}(A);
    \draw (G)[bend right=10] to node[pos=0.8, below] {$e^{-\frac{\pi i}{7}}$}(B);
    \draw (B)[bend right=10] to node[pos=0.4, above] {$e^{\frac{\pi i}{7}}$}(G);
    \draw (D) [bend right=5] to node[pos=0.2, above] {$e^{-\frac{2\pi i}{7}}$}(G);
    \draw (G) [bend right=5] to node[pos=0.8, below] {$e^{\frac{2\pi i}{7}}$}(D);
    
    \draw (H) [bend right=20] to node[pos=0.5, right] {$e^{\frac{2\pi i}{7}}$} (E);
    \draw (E) [bend right=20] to node[pos=0.5, left] {$e^{\frac{-2\pi i}{7}}$} (H);
    \draw (G) [bend right=5] to node[pos=0.5, below] {$\scriptstyle 1$}(H);
    \draw (H) [bend right=5] to node[pos=0.5, above] {$\scriptstyle 1$}(G);

    \draw (E) [bend right=10] to node[pos=0.5, right] {$\scriptstyle 1$} (F);
    \draw (F) [bend right=10] to node[pos=0.5, left] {$\scriptstyle 1$} (E);

\end{tikzpicture}
\caption{Example of a balanced complex-weighted graph, where the node set is partitioned into three subsets satisfying the conditions of Proposition~\ref{th:charact-balanced}.}
\label{fig:balancedGraph}
\end{figure}
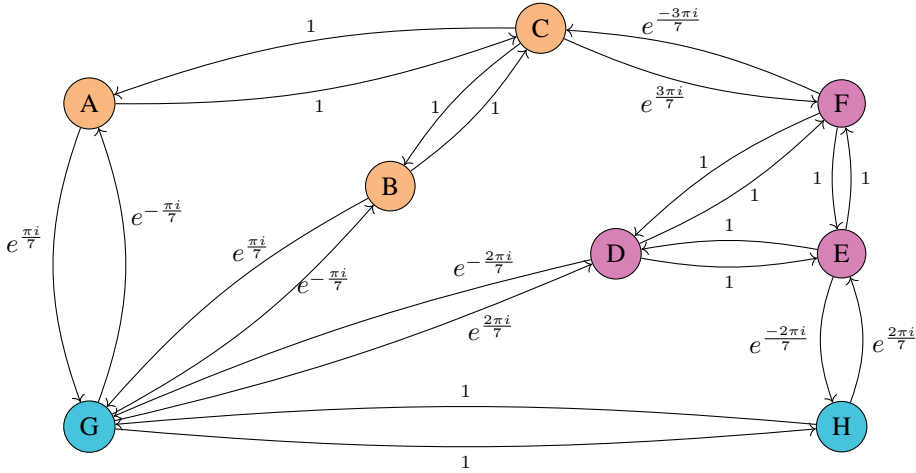
\begin{proposition}\label{th:charact-balanced}
    A complex-weighted graph $G$ is balanced if and only if there is a partition of the nodes $\mathcal{V}=\{V_i\}_{i=1}^{l_p}$ such that:
    \begin{enumerate}[label=\roman*)]
        \item Any edges within each node subset in $\mathcal{V}$ have phase 0.
        \item Any edges between the same pair of node subsets in $\mathcal{V}$ have the same phase.
        \item If we define the graph $G'$ considering each node subset in $\mathcal{V}$ as a super node, the phase of every cycle in $G'$ is 0.
    \end{enumerate}
\end{proposition}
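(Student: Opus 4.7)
The proof splits naturally into the two directions of the biconditional, and the whole argument is driven by a single idea: a complex-weighted graph is balanced if and only if its edge phases are the coboundary of a vertex potential $\phi\colon V\to[0,2\pi)$, i.e.\ $\varphi_{ij}\equiv \phi(v_j)-\phi(v_i)\pmod{2\pi}$. Once this characterisation is in hand, the partition announced in the proposition will be the level sets of $\phi$.

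\textbf{Sufficiency ($\Leftarrow$).} Assume a partition $\mathcal V=\{V_i\}$ obeying (i)--(iii) exists. By (iii) the super graph $\tilde G$ is itself balanced on its (well-defined, by (ii)) edge phases, and its Hermitian structure is inherited from $G$, so any closed walk in $\tilde G$ has total phase $0$ (decompose it iteratively into simple cycles, noting that a back-and-forth traversal of an edge contributes $\varphi+(-\varphi)=0$). Now take any cycle $C=(v_{i_0},v_{i_1},\dots,v_{i_k}=v_{i_0})$ in $G$ and project it to the sequence of blocks it visits; by (i) the intra-block steps contribute $0$, while by (ii) the inter-block steps contribute exactly the phase of the corresponding edge in $\tilde G$. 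Thus $\theta(C)$ equals the phase of a closed walk in $\tilde G$, which is $0$. Hence $G$ is balanced.

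\textbf{Necessity ($\Rightarrow$).} Assume $G$ is balanced. Pick a spanning tree $T$ with root $v_0$ and set $\phi(v)$ to be the phase (mod $2\pi$) of the unique $v_0\!\to\!v$ path in $T$. For any edge $(v_i,v_j)\in E$, closing it into a cycle with the two tree paths $v_0\!\to\!v_i$ and $v_j\!\to\!v_0$ gives a cycle whose phase, by hypothesis, is $0$; rearranging yields $\varphi_{ij}\equiv\phi(v_j)-\phi(v_i)\pmod{2\pi}$. Define the partition $\mathcal V$ by the equivalence $v\sim v'\iff \phi(v)=\phi(v')$. Condition (i) is immediate (intra-block edges satisfy $\phi(v_j)-\phi(v_i)=0$); condition (ii) follows since for any two blocks $V_a,V_b$ all edges between them produce the same difference $\phi(V_b)-\phi(V_a)$; and condition (iii) holds because in $\tilde G$ every cycle $(V_{a_0},V_{a_1},\dots,V_{a_m}=V_{a_0})$ has phase $\sum_{l}(\phi(V_{a_{l+1}})-\phi(V_{a_l}))\equiv 0\pmod{2\pi}$, a telescoping sum.

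\textbf{Expected obstacle.} The only non-cosmetic subtlety is the closed-walk step in sufficiency: one must justify that a closed walk in the Hermitian super graph $\tilde G$ has total phase equal to the sum of the phases of its simple cycles in some decomposition, modulo pairs of reversed traversals that cancel. I would handle this by induction on the length of the closed walk, peeling off a simple cycle at the first repeated vertex and invoking (iii) together with the Hermitian cancellation $\varphi_{ab}+\varphi_{ba}\equiv 0\pmod{2\pi}$; once this lemma is isolated, the rest of the proof is bookkeeping. Everything else (the potential-from-spanning-tree construction and the level-set partition) is standard cohomological reasoning for signed/gain graphs and transcribes to the complex-weighted Hermitian setting without change.
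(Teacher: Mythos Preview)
The paper does not actually prove this proposition: its proof reads in full ``See \cite{tiancomplexnet}.'' So there is no in-paper argument to compare your proposal against.

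That said, your argument is correct and is essentially the standard switching/potential characterisation of balance for gain graphs, transcribed to the Hermitian complex-weighted setting. The necessity direction is clean: connectedness (assumed throughout the paper) guarantees a spanning tree, the fundamental-cycle computation yields $\varphi_{ij}\equiv\phi(v_j)-\phi(v_i)$, and the level-set partition then satisfies (i)--(iii) by direct inspection. The sufficiency direction is also fine; the obstacle you flag (closed walks in $\tilde G$ versus simple cycles) is genuine but your proposed induction---peel off a simple cycle at the first repeated vertex, invoke (iii), and cancel reversed edge pairs via the Hermitian relation $\varphi_{ab}+\varphi_{ba}\equiv 0$---disposes of it. One cosmetic point worth making explicit in the necessity direction: the closed walk $v_0\to v_i\to v_j\to v_0$ you form from two tree paths and the edge $(v_i,v_j)$ need not be a simple cycle, but the shared prefix of the two tree paths contributes a forward/backward pair that cancels, reducing to the genuine fundamental cycle of $(v_i,v_j)$ relative to $T$, to which the balance hypothesis applies directly.
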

\begin{proof}
    See \cite{tiancomplexnet}.
\end{proof}
The following result characterizes the asymptotic behaviour of a complex random walk on a balanced graph within the infinite time limit. Note that we state the proposition for one feature channel for simplicity, and the asymptotic behaviour of $\mb{X}\in\C^{n\times f}$ is defined by considering the behaviour of each channel separately.

\begin{proposition}\label{th:convergente-rw} 
Let $G$ be a balanced and not bipartite complex-weighted graph with associated partition $\mathcal{V}=\{V_i\}_{i=1}^{l_p}$ (Proposition~\ref{th:charact-balanced}). Then, the steady state of a complex random walk is $\mb {\hat x}=(\hat x_j)$, with
\begin{equation}\label{eq:steadystate}
\hat x_j=e^{i\theta_{1\sigma(j)}}\frac{\mb x(0)^T\tilde{\mathbf{1}}_1d_j}{2m},     
\end{equation}
where:
\begin{itemize}
    \item $\mb x(0)=(x_j(0))\in\C^n$ is the initial state vector.
    \item $2m=\sum_{j=1}^nd_j$.
    \item $\theta_{hl}$ is the phase of a path from nodes in $V_h$ to nodes in $V_l$.
    \item $\sigma(\,\cdot\,)$ returns the index of the node subset in $\mathcal{V}$ that a node is associated with.
    \item $\tilde{\mathbf{1}}_1$ is the diagonal vector of $\mathbf{S}^*$, where $\mathbf{S}$ is the diagonal matrix whose $(i,i)$ element is $e^{\theta_{1\sigma(i)}i}$.
\end{itemize}

\end{proposition}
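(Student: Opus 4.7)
The plan is to reduce the complex random walk to a real-valued random walk via a diagonal unitary gauge transformation, then apply Perron--Frobenius. The balance hypothesis (Proposition~\ref{th:charact-balanced}) supplies exactly the structure needed for the reduction, while the non-bipartite assumption provides aperiodicity.

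First I would extract the phase structure of $\mb W$ from balance. By condition (ii), every edge from $V_h$ to $V_l$ carries a common phase $\varphi_{hl}$, and applying condition (iii) to the cycle $V_1 \to V_h \to V_l \to V_1$ in the super graph $\tilde G$ yields $\varphi_{hl} \equiv \theta_{1l} - \theta_{1h}$ modulo $2\pi$. Hence $W_{ij} = r_{ij}\, e^{i(\theta_{1\sigma(j)} - \theta_{1\sigma(i)})}$ with $r_{ij}=|W_{ij}|$, and with $\mathbf{S}$ as in the statement one reads off the factorisation $\mb W = \mathbf{S}^{*} \mb R \mathbf{S}$, where $\mb R = (r_{ij})$ is real symmetric. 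Since $\mathbf{S}$ is diagonal it commutes with $\mb Q$, so
\begin{equation*}
\mb P = \mb Q^{-1} \mb W = \mathbf{S}^{*}\, \tilde{\mb P}\, \mathbf{S}, \qquad \tilde{\mb P} := \mb Q^{-1} \mb R,
\end{equation*}
i.e.\ $\mb P$ is unitarily conjugate (via the diagonal unitary $\mathbf{S}$) to the transition matrix $\tilde{\mb P}$ of the standard random walk on the real-weighted graph $(V, E, \mb R)$, and $\mb R$ has the same underlying simple graph as $G$.

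Next, since $G$ is connected and non-bipartite, $\tilde{\mb P}$ is irreducible and aperiodic, so the ergodic theorem gives $\tilde{\mb P}^{t} \to \mathbf{1}\, \pi^{T}$ with $\pi_j = d_j/(2m)$, $d_j = q_j$ and $2m = \sum_k d_k$. Iterating the conjugation and passing to the limit,
\begin{equation*}
\mb P^{t} \longrightarrow \mathbf{S}^{*}\, \mathbf{1}\, \pi^{T}\, \mathbf{S},
\end{equation*}
a rank-one matrix whose entries encode precisely the phase pattern $e^{i\theta_{1\sigma(\cdot)}}$ and the stationary weights $d_\cdot/(2m)$ that appear in the statement. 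Applying this limit to $\mb x(0)$ and collecting the diagonal of $\mathbf{S}^{*}$ into the vector $\tilde{\mathbf{1}}_1$ produces the announced closed form for $\hat x_j$.

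The crux is verifying the factorisation $\mb W = \mathbf{S}^{*} \mb R \mathbf{S}$, which rests on $\theta_{1\sigma(i)}$ being well-defined independently of the chosen path from $V_1$ to $V_{\sigma(i)}$; this path-independence is exactly the content of balance condition (iii). Once it is established, the remainder of the proof is a routine application of Perron--Frobenius, with the non-bipartite hypothesis inserted only to rule out period-$2$ oscillations in $\tilde{\mb P}^{t}$ that would otherwise prevent convergence of $\mb P^{t}$.
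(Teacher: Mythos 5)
The paper does not actually prove this proposition: it is imported from \citet{tiancomplexnet} and the ``proof'' is a citation. Your gauge-transformation argument --- factoring $\mb W=\mathbf{S}^{*}\mb R\mathbf{S}$ via path-independence of $\theta_{1\sigma(\cdot)}$ (condition (iii)), conjugating $\mb P$ to the real transition matrix $\tilde{\mb P}=\mb Q^{-1}\mb R$, and invoking Perron--Frobenius with non-bipartiteness supplying aperiodicity --- is the right idea and is essentially how the result is obtained in the source, so as a reconstruction of the missing proof it is on target. The factorisation itself checks out: $(\mathbf{S}^{*}\mb R\mathbf{S})_{ij}=r_{ij}e^{i(\theta_{1\sigma(j)}-\theta_{1\sigma(i)})}=W_{ij}$, and $\pi_k=d_k/2m$ is indeed stationary for $\tilde{\mb P}$.

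However, your last step does not produce Equation~\eqref{eq:steadystate}, and you assert the match without checking it. Under the dynamics of Definition~\ref{def:complex_rw}, $\mb x(t)=\mb P^{t}\mb x(0)$ with $\mb P=\mathbf{S}^{*}\tilde{\mb P}\mathbf{S}$ and $\tilde{\mb P}^{t}\to\mathbf{1}\pi^{T}$, so the limit is $\mathbf{S}^{*}\mathbf{1}\pi^{T}\mathbf{S}\,\mb x(0)$, whose $j$-th entry is $e^{-i\theta_{1\sigma(j)}}\sum_{k}\frac{d_k}{2m}e^{i\theta_{1\sigma(k)}}x_k(0)$: the degree weight sits on the \emph{source} index $k$ inside the sum, and the outer phase is $e^{-i\theta_{1\sigma(j)}}$. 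The stated formula instead attaches $d_j/2m$ to the \emph{target} node and carries outer phase $e^{+i\theta_{1\sigma(j)}}$; it is the $j$-th entry of $\mathbf{S}\pi\mathbf{1}^{T}\mathbf{S}^{*}\mb x(0)$, i.e.\ the limit of $(\mb P^{T})^{t}\mb x(0)$, which corresponds to iterating $\mb P$ on row vectors (the measure-evolution convention of the source), not to $\mb X(t+1)=\mb P\mb X(t)$. You need to either adopt that convention explicitly or flag the mismatch; as written, ``produces the announced closed form'' is false. The discrepancy is immaterial for Theorem~\ref{cor:linearSep} --- both expressions assign a common phase to each $V_l$ and distinct phases to distinct subsets --- but a proof of the proposition \emph{as stated} must account for it.
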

\begin{proof}
    See \cite{tiancomplexnet}.
\end{proof}
\subsection{The Linear Separation Power of Complex-Weighted
Diffusion}
Let us first prove that every graph can be endowed with an appropriate complex-weighted structure.

\begin{proposition}\label{th:existence-balanced}
        Let $G=(V,E)$ be an unweighted, undirected graph and $\mathcal{V}=\{V_i\}_{i=1}^{l_p}$ a partition of its nodes. Then, there exists a complex-weighted graph $G'=(V,E, \mb W)$ such that:
        \begin{enumerate}
            \item It satisfies conditions (i), (ii) and (iii) of Proposition \ref{th:charact-balanced} for the partition $\{V_i\}_{i=1}^{l_p}$. Therefore, it is a balanced graph.
            \item All the edges of condition (ii) between the subsets $V_1$ and $V_i$ have different weight for each $i$.

        \end{enumerate}
\end{proposition}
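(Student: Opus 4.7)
The plan is to build $\mb W$ by assigning a ``potential'' phase $\theta_c$ to each class $V_c$ and then setting each edge phase equal to the difference of the endpoint potentials. This immediately forces the accumulated phase of every cycle to telescope to zero, so condition (iii) of Proposition~\ref{th:charact-balanced} is built into the construction rather than verified a posteriori; conditions (i), (ii), and the distinctness requirement of part 2 then reduce to choosing the potentials appropriately. This is a conceptually cleaner alternative to, but logically equivalent to, the triangle cycle-basis route sketched for Theorem~\ref{cor:linearSep}.

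Concretely, I first form the super graph $\tilde G$ on vertex set $\{V_1, \dots, V_{l_p}\}$ with a super edge $\{V_c, V_d\}$ whenever $G$ contains at least one edge with one endpoint in $V_c$ and the other in $V_d$; connectedness of $G$ makes $\tilde G$ connected, though the argument does not use this. Next, choose pairwise distinct values $\theta_1, \dots, \theta_{l_p} \in [0, 2\pi)$, for instance $\theta_c := 2\pi(c-1)/l_p$. For every edge $\{v_a, v_b\} \in E$ with $v_a \in V_c$ and $v_b \in V_d$ (possibly $c = d$), set
\[
W_{ab} := \exp\bigl(\mathrm{i}(\theta_d - \theta_c)\bigr), \qquad W_{ba} := \exp\bigl(\mathrm{i}(\theta_c - \theta_d)\bigr),
\]
and $W_{ab} := 0$ whenever $\{v_a, v_b\} \notin E$. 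Hermiticity is immediate since $W_{ba} = \overline{W_{ab}}$.

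The verification is then straightforward. Condition (i) holds because $c = d$ gives phase $\theta_c - \theta_c = 0$; condition (ii) holds because the assigned phase of any edge between $V_c$ and $V_d$ depends only on $c$ and $d$, namely $\theta_d - \theta_c$. For condition (iii), any cycle $V_{c_1} \to V_{c_2} \to \dots \to V_{c_k} \to V_{c_1}$ in $\tilde G$ accumulates total phase
\[
(\theta_{c_2} - \theta_{c_1}) + (\theta_{c_3} - \theta_{c_2}) + \dots + (\theta_{c_1} - \theta_{c_k}) \equiv 0 \pmod{2\pi}
\]
by telescoping, so $\tilde G$ and hence $G'$ is structurally balanced. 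Finally, the common phase of the edges between $V_1$ and $V_c$ is $\theta_c - \theta_1$, and since the $\theta_c$'s were chosen pairwise distinct, these phases are distinct across different $c$, proving part 2.

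I do not foresee a genuine obstacle: potential-difference assignments are the canonical way to realise any coboundary-type edge labelling, and all four requirements fall out of the definition. The only mild subtlety is that the prescription must be consistent when the same undirected edge is read in both orientations, which is exactly what the Hermitian condition encodes. If one instead wants to follow the triangle-basis strategy of the Theorem~\ref{cor:linearSep} sketch, the same potentials produce phase-$0$ triangles and the same verification carries through, but the telescoping argument above makes a cycle basis unnecessary.
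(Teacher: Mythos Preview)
Your proof is correct and takes a genuinely different, more direct route than the paper's. The paper also works on the super graph $\tilde G$, but instead of assigning potentials it assumes $\tilde G$ is complete, fixes the star spanning tree at $\tilde v_1$, sets the phases on the star edges to distinct values $\theta_i$, and then \emph{defines} each remaining edge so that its fundamental triangle has phase~$0$; it must then argue separately (via symmetric differences) that every triangle, and finally every cycle, inherits phase~$0$. In effect the paper rediscovers, edge by edge, the same weights your potential construction writes down in one line: the triangle constraint forces the phase on $(\tilde v_i,\tilde v_j)$ to be $\theta_j-\theta_i$, which is exactly your coboundary formula. What your approach buys is that condition~(iii) is immediate by telescoping, with no cycle basis, no completeness assumption, and no inductive reduction of long cycles to triangles; what the paper's route buys is perhaps a closer match to the framing of Proposition~\ref{th:charact-balanced}, but at the cost of a substantially longer verification. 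Your remark that the two are logically equivalent is exactly right.
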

\begin{proof}
    Let us first consider the graph with $l_p$ nodes resulting from considering each node subset $V_i$ of $G$ as a super node. We denote this graph by $\tilde{G}$ and its nodes by $\tilde{v}_1,\dots,\tilde{v}_{l_p}$, where each $\tilde{v}_i$ corresponds to the subset $V_i$.
    We will show that it is possible to assign weights to $\tilde G$ satisfying (iii) and such that the edges between nodes $\tilde v_1$ and $\tilde v_i$ are different for each $i$. We denote by $\mb{\tilde W}=(\tilde W_{ij})$ the complex weight matrix of $\tilde G$.
    
    We assume w.l.o.g. that $\tilde G$ is complete (if condition (iii) holds for $\tilde G$ complete, it holds for any graph with $l_p$ nodes, since removing edges does not add any new cycle to the graph).

    In addition, for any complete graph it is possible to choose a cycle basis whose elements are all triangles. To see that, note that a cycle basis can be obtained from any spanning tree of $\tilde G$ by selecting the cycles formed by combining a path in the tree with a single edge outside the tree.
    Therefore, we can choose the fundamental cycle basis formed from the spanning tree with edges $\{(\tilde v_1,\tilde{v}_2),(\tilde{v}_1,\tilde{v}_3),\dots,(\tilde{v}_1,\tilde{v}_{l_p})\}$. Denote this fundamental cycle basis by $\{T_1,\dots,T_m\}$ and note that it contains every triangle of the form $(\tilde v_{i},\tilde v_{j},\tilde v_1),\,i\neq j$.

   We assign complex weights to $\tilde G$ in the following way.
   First, set $\tilde W_{1i}=e^{i\theta_i}$, choosing $\theta_i$ such that $e^{i\theta_i}\neq e^{i\theta_j}$ for all $i\neq j$. Then, set $\tilde W_{i1}=\overline{\tilde{W}_{i1}}=e^{-i\theta_i}$. Finally, for $k=1,\dots,m$, assign the weights $\tilde W_{ij}$ and $\tilde W_{ji}$ of the remaining edge $e_k$ of each $T_k$ so that the sum of the phases of the cycle $T_k$ is $0$. Note that $\tilde W_{ji}=\overline{\tilde W_{ij}}$ trivially. Note that two triangles of the basis cannot share the same edge $e_k$ because of the way the fundamental basis is built, so this is well-defined. 
    
    Since $\tilde G$ is complete, every edge in $\tilde G$ belongs to some $T_k$. Therefore, we have assigned weights to every edge in $\tilde G$ such that the edges between nodes $\tilde v_1$ and $\tilde v_j$ are different for each $j$ and (iii) is satisfied for the cycles $T_1,\dots, T_m$. Next, we will prove that (iii) holds for every other triangle and, finally, that it holds for any cycle in $\tilde G$.

    Consider first a triangle $T=(\tilde v_{i_1}, \tilde v_{i_2}, \tilde v_{i_3})$ in $\tilde G$, such that $T\notin\{T_1,\dots, T_m\}$, thus $\tilde v_{i_j}\neq \tilde v_1$ for $j=1,2,3$. Define the triangles $T'_1:=(\tilde v_{i_1}, \tilde v_{i_2}, \tilde v_1)$, $T'_2:=(\tilde v_{i_2}, \tilde v_{i_3}, \tilde v_1)$, $T'_3:=(\tilde v_{i_3}, \tilde v_{i_1}, \tilde v_1)$ and the cycle $C=(\tilde v_{i_1},\tilde v_{i_2},\tilde v_{i_3},\tilde v_1)$. Note that $T_u'\in\{T_1,\dots, T_m\}$ for $u=1,2,3$ and that:
    \[T_1'\triangle T_2'=C\quad\quad\quad C\triangle T_3'=T,\]
    where $\triangle$ is the symmetric difference. Then, $T$ is expressed as a symmetric difference of basis triangles as $T=T_1'\triangle T_2' \triangle T_3'$. This is illustrated in Figure \ref{fig:symDiff}. Fix the orientation in $T$ given by the order $(\tilde v_{i_1}, \tilde v_{i_2}, \tilde v_{i_3})$ and denote by $\theta_{12}, \theta_{23}, \theta_{31}$ the phases corresponding to the weights of this oriented cycle. Next, set the orientations of $T'_1$, $T'_2$ and $T'_3$ given by the orders $(\tilde v_{i_1},\tilde v_{i_2},\tilde v_1)$, $(\tilde v_{i_2},\tilde v_{i_3},\tilde v_1)$ and $(\tilde v_{i_3},\tilde v_{i_1},\tilde v_1)$, respectively, as illustrated in Figure~\ref{fig:symDiff}.

\begin{figure}[]
\begin{subfigure}[]{\textwidth}
\centering
\begin{tikzpicture}[>=Latex, scale=1,
  every node/.style={
    circle,
    draw,
    minimum size=0.8cm,
    inner sep=1pt,
    font=\scriptsize
  }
]

\node[draw=none, minimum size=0] at (-1.1,0) {$\mb T$};
\node[draw=none, minimum size=0,text=orange] at (0.6,-0.7) {$T_2'$};
\node[draw=none, minimum size=0,text=purple] at (0.6,0.7) {$T_1'$};
\node[draw=none, minimum size=0,text=teal] at (-2.5,0) {$T_3'$};

\node (v1) at (3,0) {$\tilde{v}_1$};
\node (vi1) at (-2,2) {$\tilde{v}_{i_1}$};
\node (vi2) at (0.5,0) {$\tilde{v}_{i_2}$};
\node (vi3) at (-2,-2) {$\tilde{v}_{i_3}$};

\draw[->, purple, thick] ([yshift=2pt]vi2.east) -- ([yshift=2pt]v1.west);
\draw[->, orange, thick]  ([yshift=-2pt]v1.west) -- ([yshift=-2pt]vi2.east);

\draw[->, teal, thick] ([xshift=-2pt]vi3.north) -- ([xshift=-2pt]vi1.south);
\draw[->, black, very thick] ([xshift=2pt]vi3.north) -- ([xshift=2pt]vi1.south);

\draw[->, black,very thick] ([xshift=-2pt, yshift=-2pt]vi1.south east) -- ([xshift=-2pt, yshift=-2pt]vi2.north west);
\draw[->, purple, thick] ([xshift=2pt, yshift=2pt]vi1.south east) -- ([xshift=2pt, yshift=2pt]vi2.north west);

\draw[->, black, very thick] ([xshift=-2pt, yshift=2pt]vi2.south west) -- ([xshift=-2pt, yshift=2pt]vi3.north east);
\draw[->, orange, thick] ([xshift=2pt, yshift=-2pt]vi2.south west) -- ([xshift=2pt, yshift=-2pt]vi3.north east);

\draw[->, teal, thick] ([xshift=-2pt, yshift=2pt]vi1.north east) -- ([xshift=2pt, yshift=2pt]v1.north west);
\draw[->, purple, thick] ([xshift=-2pt, yshift=-2pt]v1.north west) -- ([xshift=4pt, yshift=-7pt]vi1.north east);

\draw[->, orange, thick] ([xshift=4pt, yshift=8pt]vi3.south east)--([xshift=-2pt, yshift=2pt]v1.south west) ;
\draw[->, teal, thick] ([xshift=2pt, yshift=-1pt]v1.south west) -- ([xshift=3pt, yshift=2pt]vi3.south east);
\end{tikzpicture}
  \end{subfigure}

\begin{subfigure}[]{\textwidth}
\centering
\begin{tikzpicture}[>=Latex, scale=0.8,
  every node/.style={
    circle,
    draw,
    minimum size=0.8cm,
    inner sep=1pt,
    font=\scriptsize
  }
]

\node[draw=none, minimum size=0] at (-2.3,-2.8) {$\mb T$};
\node[draw=none, minimum size=0,text=orange] at (-12.8,-2) {$T_2'$};
\node[draw=none, minimum size=0,text=purple] at (-12.8,0.75) {$T_1'$};
\node[draw=none, minimum size=0,text=teal] at (-7.25,-5) {$T_3'$};
\node[draw=none, minimum size=0,text=black] at (-6,-0.6) {$C$};

\node (v1) at (-10,0) {$\tilde{v}_1$};
\node (vi1) at (-15,2) {$\tilde{v}_{i_1}$};
\node (vi2) at (-12.5,0) {$\tilde{v}_{i_2}$};

\node (v1_2) at (-10,-1.2) {$\tilde{v}_1$};
\node (vi2_2) at (-12.5,-1.2) {$\tilde{v}_{i_2}$};
\node (vi3_2) at (-15,-3.2) {$\tilde{v}_{i_3}$};

\node (v1_3) at (-5,-0.6) {$\tilde{v}_1$};
\node (vi1_3) at (-8.5,1) {$\tilde{v}_{i_1}$};
\node (vi2_3) at (-7,-0.6) {$\tilde{v}_{i_2}$};
\node (vi3_3) at (-8.5,-2.2) {$\tilde{v}_{i_3}$};

\node (v1_4) at (-5,-5) {$\tilde{v}_1$};
\node (vi1_4) at (-8.5,-3.4) {$\tilde{v}_{i_1}$};
\node (vi3_4) at (-8.5,-6.6) {$\tilde{v}_{i_3}$};

\node (vi1_5) at (-3,-0.9) {$\tilde{v}_{i_1}$};
\node (vi2_5) at (-1,-2.8) {$\tilde{v}_{i_2}$};
\node (vi3_5) at (-3,-4.7) {$\tilde{v}_{i_3}$};

\draw[->] (-9.5,-0.6) -- (-8.2,-0.6) node[midway, above, draw=none, shape=rectangle, inner sep=0, minimum size=0, font=\scriptsize] {$\triangle$};

\draw[-, purple, thick] (vi1) -- (vi2);
\draw[-, purple, thick] (vi2) -- (v1);
\draw[-, purple, thick] (v1) -- (vi1);

\draw[-, orange, thick] (v1_2) -- (vi2_2);
\draw[-, orange, thick] (vi2_2) -- (vi3_2);
\draw[-, orange, thick] (v1_2) -- (vi3_2);

\draw[-, black, thick] (v1_3) -- (vi1_3);
\draw[-, black, thick] (v1_3) -- (vi3_3);
\draw[-, black, thick] (vi2_3) -- (vi3_3);
\draw[-, black, thick] (vi1_3) -- (vi2_3);

\draw[-, teal, thick] (v1_4) -- (vi1_4);
\draw[-, teal, thick] (vi1_4) -- (vi3_4);
\draw[-, teal, thick] (v1_4) -- (vi3_4);

\draw[->] (-5,-2.8) -- (-3.7,-2.8) node[midway, above, draw=none, shape=rectangle, inner sep=0, minimum size=0, font=\scriptsize] {$\triangle$};

\draw[-, black,very thick] (vi1_5) -- (vi2_5);
\draw[-, black,very thick] (vi1_5) -- (vi3_5);
\draw[-, black,very thick] (vi3_5) -- (vi2_5);

\end{tikzpicture}
\end{subfigure}
\caption{Illustration of how the triangle $T$ in the proof of Proposition~\ref{th:existence-balanced} can be obtained as $T=T_1'\triangle T_2' \triangle T_3'$.}
\label{fig:symDiff}
\end{figure}
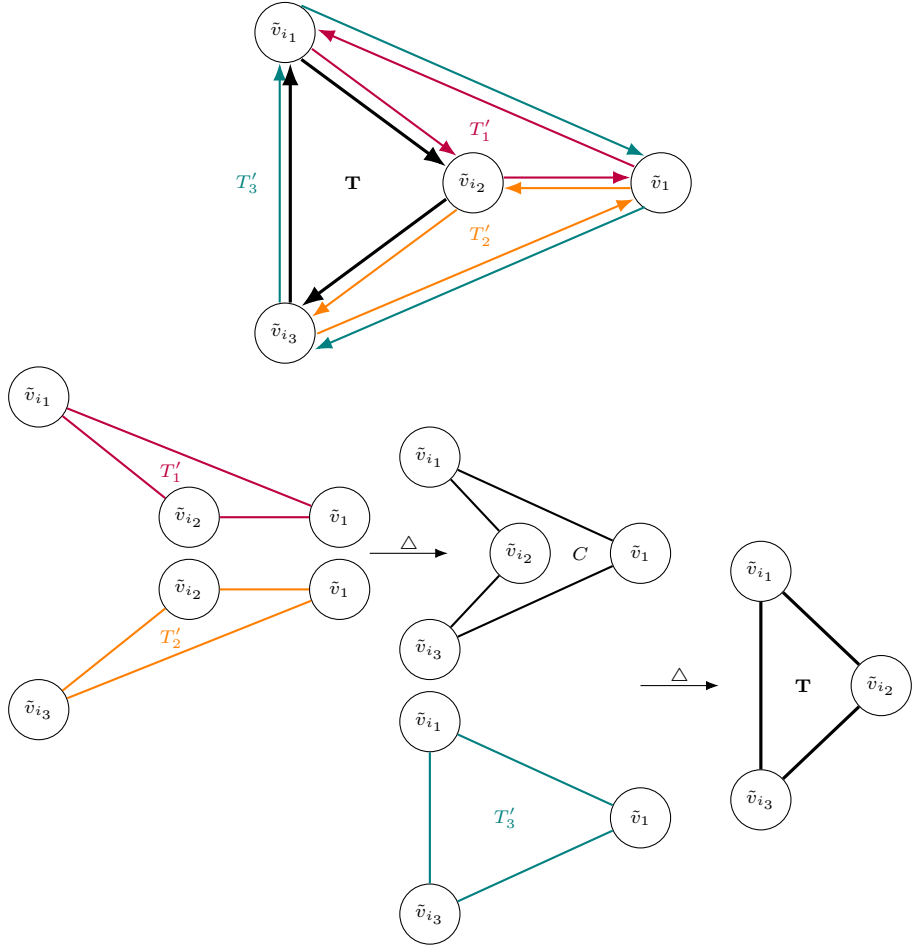

    Denote by $\theta_{*j}$ the phase of the weight $\tilde W_{1i_j}$ and $\theta_{j*}$ the phase of the weight $\tilde W_{i_j1}$ (then, $\theta_{j*}=-\theta_{*j}\mod 2\pi$). Then, since $T'_u$ are elements of the basis, we have:

    \[(\theta_{12}+\theta_{2*}+\theta_{*1})\operatorname{mod}2\pi=0\iff (\theta_{12}-\theta_{*2}+\theta_{*1})\operatorname{mod}2\pi=0\]
    \[(\theta_{23}+\theta_{3*}+\theta_{*2})\operatorname{mod}2\pi=0 \iff (\theta_{23}-\theta_{*3}+\theta_{*2})\operatorname{mod}2\pi=0\]
    \[(\theta_{31}+\theta_{1*}+\theta_{*3})\operatorname{mod}2\pi=0\iff (\theta_{31}-\theta_{*1}+\theta_{*3})\operatorname{mod}2\pi=0\]
    Then:
    \[0=(\theta_{12}-\theta_{*2}+\theta_{*1}+\theta_{23}-\theta_{*3}+\theta_{*2}+\theta_{31}-\theta_{*1}+\theta_{*3})\operatorname{mod}2\pi=(\theta_{12}+\theta_{23}+\theta_{31})\operatorname{mod}2\pi\]

    Note that a similar argument can be made if we fix the opposite orientation in $T$. Therefore, we have proven that every triangle in $\tilde G$ satisfies (iii).

    Finally, let us prove that every cycle in $\tilde G$ satisfies (iii).
    Consider an oriented cycle $(\tilde v_{i_1},\dots,\tilde v_{i_r})$ and denote by $\theta_{kl}$ the phase corresponding to the weight $\tilde W_{i_ki_l}$. Then: 
    \begin{align*}
       (\theta_{12}+\theta_{23}+\dots+\theta_{(r-1)r}+\theta_{r1})\operatorname{mod}2\pi&=(\theta_{13}+\theta_{34}+\dots+\theta_{(r-1)r}+\theta_{r1})\operatorname{mod}2\pi\\&=(\theta_{14}+\theta_{45}+\dots+\theta_{(r-1)r}+\theta_{r1})\operatorname{mod}2\pi\\&\dots\\&=(\theta_{1(r-1)}+\theta_{(r-1)r}+\theta_{r1})\operatorname{mod}2\pi=0, 
    \end{align*}
    where we have each equality by substituting the sum of the weights of a $2$-length path between two nodes by the weight of the edge joining the nodes (which forms a triangle). This process is illustrated in Figure \ref{fig:prop3cycle}. The last equality holds because $(\tilde v_{i_1i_{r-1}},\tilde v_{i_{r-1}i_r},\tilde v_{i_ri_1})$ is a triangle.

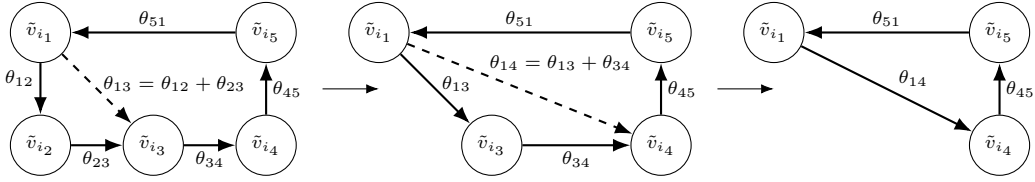
\begin{figure}
\centering
\begin{tikzpicture}[>=Latex, scale=0.75,
  every node/.style={
    circle,
    draw,
    minimum size=0.8cm,
    inner sep=1pt,
    font=\scriptsize
  }
]

\node (vi1) at (-17,2) {$\tilde{v}_{i_1}$};
\node (vi2) at (-17,0) {$\tilde{v}_{i_2}$};
\node (vi3) at (-15,0) {$\tilde{v}_{i_3}$};
\node (vi4) at (-13,0) {$\tilde{v}_{i_4}$};
\node (vi5) at (-13,2) {$\tilde{v}_{i_5}$};
\draw[->, black, thick] (vi1) -- (vi2) node[midway, left,yshift=4pt,xshift=-2pt, draw=none, shape=rectangle, inner sep=0, minimum size=0, font=\scriptsize] {$\theta_{12}$};
\draw[->, black, thick] (vi2) -- (vi3) node[midway, below,yshift=-2pt, draw=none, shape=rectangle, inner sep=0, minimum size=0, font=\scriptsize]{$\theta_{23}$};
\draw[->, black, thick] (vi3) -- (vi4)node[midway, below,yshift=-2pt, draw=none, shape=rectangle, inner sep=0, minimum size=0, font=\scriptsize]{$\theta_{34}$};
\draw[->, black, thick] (vi4) -- (vi5)node[midway, right,xshift=2pt, draw=none, shape=rectangle, inner sep=0, minimum size=0, font=\scriptsize] {$\theta_{45}$};
\draw[->, black, thick] (vi5) -- (vi1) node[midway, above,yshift=2pt, draw=none, shape=rectangle, inner sep=0, minimum size=0, font=\scriptsize]{$\theta_{51}$};
\draw[->, black, thick,dashed] (vi1) -- (vi3) node[midway, right,yshift=2pt,xshift=2pt, draw=none, shape=rectangle, inner sep=0, minimum size=0, font=\scriptsize] {$\theta_{13}=\theta_{12}+\theta_{23}$};

\draw[->] (-12,1) -- (-11,1);

\node (vi1_2) at (-11,2) {$\tilde{v}_{i_1}$};
\node (vi3_2) at (-9,0) {$\tilde{v}_{i_3}$};
\node (vi4_2) at (-6,0) {$\tilde{v}_{i_4}$};
\node (vi5_2) at (-6,2) {$\tilde{v}_{i_5}$};

\draw[->, black, thick] (vi3_2) -- (vi4_2)node[midway, below,yshift=-2pt, draw=none, shape=rectangle, inner sep=0, minimum size=0, font=\scriptsize]{$\theta_{34}$};
\draw[->, black, thick] (vi4_2) -- (vi5_2)node[midway, right,xshift=2pt, draw=none, shape=rectangle, inner sep=0, minimum size=0, font=\scriptsize] {$\theta_{45}$};
\draw[->, black, thick] (vi5_2) -- (vi1_2) node[midway, above,yshift=2pt, draw=none, shape=rectangle, inner sep=0, minimum size=0, font=\scriptsize]{$\theta_{51}$};
\draw[->, black, thick] (vi1_2) -- (vi3_2) node[midway, right,yshift=2pt,xshift=2pt, draw=none, shape=rectangle, inner sep=0, minimum size=0, font=\scriptsize] {$\theta_{13}$};
\draw[->, black, thick,dashed] (vi1_2) -- (vi4_2) node[midway, right,yshift=10pt,xshift=-12pt, draw=none, shape=rectangle, inner sep=0, minimum size=0, font=\scriptsize] {$\theta_{14}=\theta_{13}+\theta_{34}$};
\draw[->] (-5,1) -- (-4,1);

\node (vi1_3) at (-4,2) {$\tilde{v}_{i_1}$};
\node (vi4_3) at (0,0) {$\tilde{v}_{i_4}$};
\node (vi5_3) at (0,2) {$\tilde{v}_{i_5}$};

\draw[->, black, thick] (vi4_3) -- (vi5_3)node[midway, right,xshift=2pt, draw=none, shape=rectangle, inner sep=0, minimum size=0, font=\scriptsize] {$\theta_{45}$};
\draw[->, black, thick] (vi5_3) -- (vi1_3) node[midway, above,yshift=2pt, draw=none, shape=rectangle, inner sep=0, minimum size=0, font=\scriptsize]{$\theta_{51}$};
\draw[->, black, thick] (vi1_3) -- (vi4_3) node[midway, right,xshift=4pt,yshift=4pt, draw=none, shape=rectangle, inner sep=0, minimum size=0, font=\scriptsize] {$\theta_{14}$};

\end{tikzpicture}
\caption{Illustration of the proof of Proposition~\ref{th:existence-balanced}: every cycle satisfies property (iii) of Proposition~\ref{th:charact-balanced}.}
\label{fig:prop3cycle}
\end{figure}

    Therefore, we have proven that every cycle in $\tilde G$ satisfies (iii). 
    
    Now consider $G=(V,E)$, denote its nodes by $v_1,\dots,v_n$ and define the function $\sigma$, where $\sigma(j)$ returns the index $i$ of the node subset $V_i$ to which node $v_j$ belongs. Construct the matrix $\mb W=(W_{ij})$ by assigning weights in the following way:
    \[
    W_{ij}=\begin{cases}
        0, & \text{if }(v_i,v_j)\notin E\\
        1, & \text{if }\sigma(i)=\sigma(j)\\
        \tilde W_{\sigma(i)\sigma(j)} &\text{otherwise}
    \end{cases}
    \]

By construction, $G'=(V,E,\mb W)$ satisfies all the conditions of the Proposition. 
\end{proof}

Therefore, we have proven that every graph can be assigned complex weights so that the resulting complex-weighted graph satisfies the hypothesis of Proposition~\ref{th:convergente-rw}. Thus, it is possible to describe its asymptotic behaviour in the time limit of a complex random walk.

\MainCorollary*
\begin{proof}
    First, it is possible to assign complex weights to $G$ obtaining a balanced graph $G'=(V,E,\mb W)$ that satisfies the conditions of Proposition~\ref{th:existence-balanced}. In addition, we assign self-loops to every node, $W_{ii}=1\quad\forall i=1,\dots,n$, to ensure that $G'$ is not bipartite.
    
    Then, $G'$ is in the conditions of Proposition~\ref{th:convergente-rw}, so we can obtain its steady state in a complex random walk using Equation~\eqref{eq:steadystate}. Note that the factor $\frac{\mb x(0)^T\tilde{\mathbf{1}}_1}{2m}$ does not depend on $j$, so it is common for all nodes. Therefore, two nodes $v_j$ and $v_k$ have different phases iff $e^{i\theta_{1\sigma(j)}}\neq e^{i\theta_{1\sigma(k)}}$. 
    
    By construction of $G'$, $\theta_{1\sigma(j)}\mod 2\pi=\theta_{1\sigma(k)}\mod 2\pi$ iff $\sigma(j)=\sigma(k)$. Then, two nodes have the same phase iff they belong to the same subset of $\{V_i\}_{i=1}^{l_p}$.
\end{proof}

\section{Complex Weights Learning Proof}\label{app:cw-learning}
\PropLearnW*
\begin{proof}
    Let $\mb W\in\C^{n\times n}$ be a complex weight matrix for $G$, and denote its elements by $W_{ij}=a_{ij}+ib_{ij}$. Since $\mb W$ is Hermitian, $a_{ji}=a_{ij}$ and $b_{ji}=-b_{ij}$.

    Consider the feature matrix $\mb X$ as a real-valued matrix $\mb X\in\R^{n\times k}$, with $k=2f$.
    
    Define the set $S:=\{(\mb x_i,\mb x_j): (v_i,v_j)\in E\}\subset\R^{2k}$. Since each $(\mb x_i,\mb x_j)$ is unique, the function $g\colon S\to\R^{2}$, $g(\mb x_i,\mb x_j)=(a_{ij},b_{ij})$ is well-defined. We now show that $g$ can be extended to a smooth function $f\colon\R^{2k}\to\R^{2}$.
    
    Since $S=\{s_1,\dots,s_M\}$ is a finite set, for each $ s_m\in S$, there exists a sufficiently small neighbourhood $U_m\subset\R^{2k}$ such that $ s_m\in U_m$ and $s_p\notin U_p$ for all $m\neq p$. In addition, it is possible to find a smooth bump function $\psi_m\colon \R^{2k}\to\R$ such that $\psi_m( s_m)=1$ and $\psi_m( x)=0\quad\forall x\notin U_m$. Then, the function $f(x)=\sum_{m=1}^Mg( s_m)\psi_m( x)$ is smooth and $f|_S=g$. Since $f$ is smooth, it can approximated by an MLP \cite{hornik1991approximation,hornik1989multilayer} and, thus, so does $g$.

    Now it is enough to identify $\R^2\equiv \C$, with the first coordinate corresponding to the real part and the second coordinate to the imaginary part. Then, interpreting $S\subset\C^{2f}$, we have proven that $g\colon S\to\C$, $g(\mb x_i,\mb x_j)=a_{ij}+ib_{ij}$ can be approximated by an MLP with sufficient capacity. Therefore, $\Phi$ defined in Equation~\eqref{eq:def-phi} can learn any complex weight matrix $\mb W$ of the graph $G$.
\end{proof}
\section{Additional implementation details and hyperparameters}
\paragraph{Adjusting the activation magnitudes.} Following the approach in \cite{bodnar2022neural}, we found it useful in practice to learn additional parameters $\varepsilon^l=(\varepsilon^l_1, \varepsilon^l_2)$, with $\varepsilon^l_i\in[-1,1]$, for each layer $l$. These parameters are used to compute the diffusion step as follows:
\[
    \mb{X}(l+1)= (1+\varepsilon^l)\mb{X}(l)-\sigma( (\mb I-\mb P)(\mb I_n\otimes\new{\mb M}_1^l)\mb X(l)\new{\mb M}_2^l)\in\C^{n\times f}
\]
Here, $(1+\varepsilon^l)\mb{X}(l)$ denotes scaling the real part of $\mb X(l)$ by $\varepsilon^l_1+1$ and the imaginary part by $\varepsilon^l_2+1$. This mechanism allows the model to dynamically adjust the relative magnitudes of real and imaginary components at each layer. These learnable scaling parameters are used in all of our experiments.

\paragraph{Batch normalization.} Proposition~\ref{prop:universality} guarantees that any complex weight matrix can be learned, provided the node features are sufficiently diverse. However, in practice, we observed that the initial node features $\mb X(0)$ tend to be very similar across nodes. This issue arises from the initial MLP, where the bias term often dominates in practice, causing the output features of all nodes to become nearly identical.

To address this problem, we insert a batch normalization layer immediately after the MLP. This normalizes each feature dimension across the batch of nodes, mitigating bias dominance, and promoting feature diversity. Batch normalization is used in all of our experiments.

\paragraph{Real learnable matrices.} Complex matrix multiplication is more expensive than its real counterpart. To improve efficiency, we replace the complex matrices $\new{\mb M}_1^l\in\C, \new{\mb M}_2^l\in\C^{f\times f}$ in Equation~\eqref{eq:layerOp} with real matrices of size $2 \times 2$ and $2f \times 2f$, respectively. The feature matrix $\mb X(l)$ is then treated as a real matrix in $\R^{n \times 2f}$. In practice, we observe no loss in performance with this substitution, and therefore adopt it in all our experiments.

\paragraph{Hyperparameters and Training Procedure.} 
Following \cite{bodnar2022neural}, we evaluate our model using the hyperparameter ranges listed in Table~\ref{tab:hyperparam}, where dropout has been included as a regularization technique to prevent overfitting. We assign different dropout rates to the initial layer and to the linear layers within the convolutional blocks. We train all models for a fixed maximum number of epochs and perform early stopping when the validation metric has not improved for a pre-specified number of patience epochs. We report the test results at the epoch where the best validation metric was obtained for the model configuration with the best validation score among all models. We use the hyperparameter optimisation tools provided by Weights and Biases \cite{wandb} for this procedure.

\begin{table}
    \centering
    \caption{Hyperparameter ranges used to evaluate our model in the real-world experiments.}
    \label{tab:hyperparam}
    \begin{tabular}{l|c}
        \hline
        \textbf{Hyperparameter} & \textbf{Values}\\
        \hline
        Hidden channels & (8,16,32)\\
        Layers & 2-8\\
        Learning rate & 0.02 \\
        Activation (regular layers) & ELU \\
        Activation (complex weights learning) & tanh \\
        Weight decay (regular parameters)& Log-uniform $[0.007,0.2]$\\
        Weight decay (complex weights parameters)&  Log-uniform $[0.01,0.4]$ \\
        Input dropout & Uniform $[0,0.9]$\\
        Layer dropout &Uniform $[0,0.9]$\\ 
        Patience (epochs) &1000\\
        Max training epochs & 1500\\
        Optimiser & Adam\\
        \hline
    \end{tabular}
    
\end{table}

\section{\new{Additional Synthetic Experiments}}

\begin{figure}
        \centering
         \centering
    \begin{subfigure}[b]{0.67\textwidth}
        \centering
        \includegraphics[width=\textwidth]{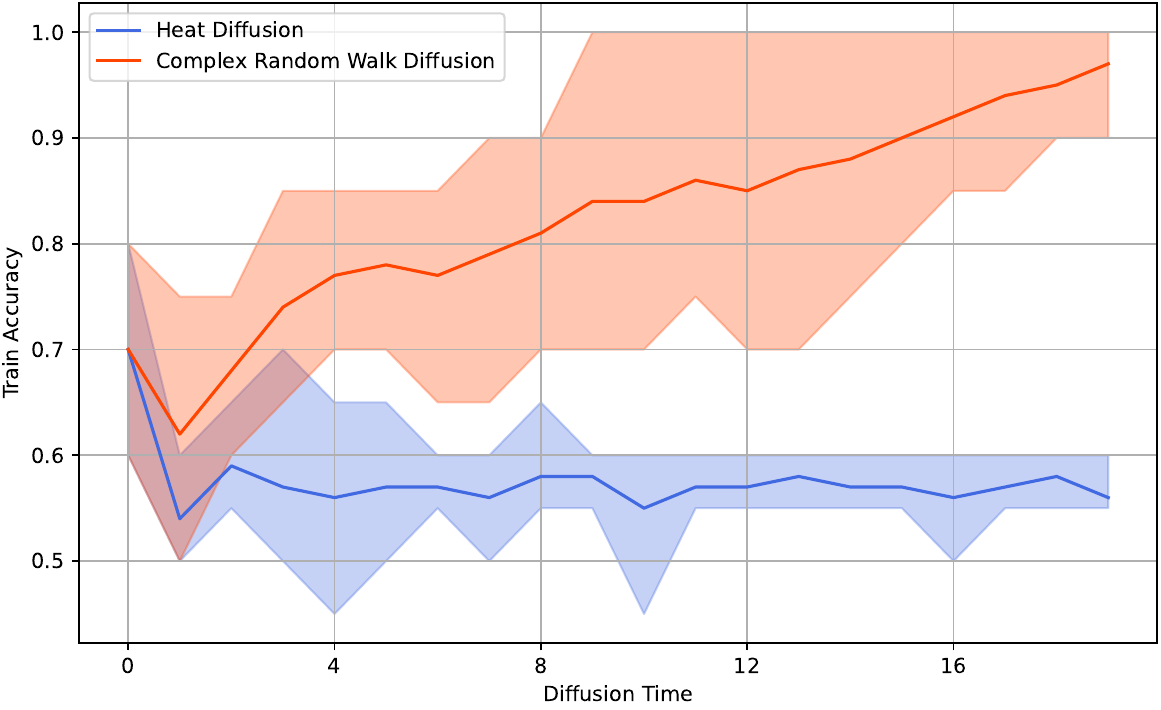}
    \end{subfigure}
    \hfill
    \begin{subfigure}[b]{0.27\textwidth}
        \centering
        \includegraphics[width=\textwidth]{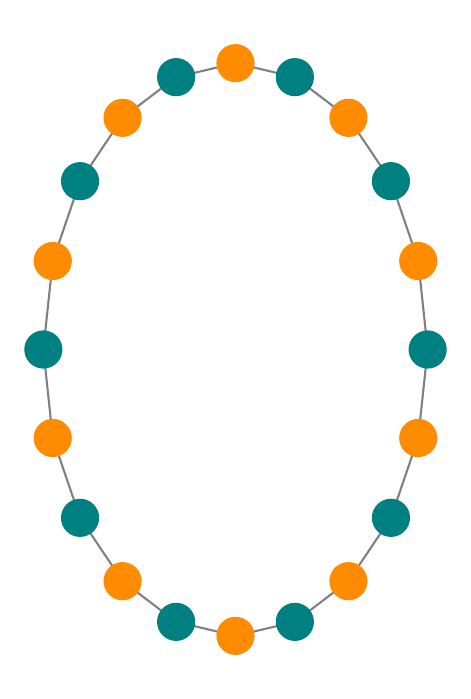}
    \end{subfigure}
        \caption{Training accuracy across diffusion layers (left) for heat diffusion and the learned complex random walk on a cycle bipartite graph with 2 classes (right): While the mean accuracy (solid line) for heat diffusion remains low throughout, the complex random walk surpasses 95 \% as diffusion progresses. The shaded region (minimum–maximum range across 5 random seeds) further shows that heat diffusion never exceeds 70 \%, whereas the complex random walk maintains over 90 \% mean accuracy across all seeds.}

         \label{fig:syntheticAcc}
    \end{figure}

In this section, we conduct further synthetic experiments that allow a direct comparison with standard heat diffusion in a controlled setting.

We consider a bipartite cycle graph with 20 nodes divided into two partitions (Figure~\ref{fig:syntheticAcc}, right), and assign node features sampled from two overlapping isotropic Gaussian distributions. This setup ensures that the classes are \new{not linearly separable} at initialization. As shown by \cite{li2018deeper}, heat diffusion fails to separate the classes in the diffusion limit. In contrast, Theorem~\ref{cor:linearSep} demonstrates that a complex random walk can achieve linear separation in its steady state. In this experiment, we study whether a suitable complex weight matrix can be learned directly from data in this simplified setting, using a vanilla random walk diffusion process, i.e., by setting $\new{\mb M}_1=1$, $\new{\mb M}_2=\mb I_f$ and $\sigma=\operatorname{id}$ in Equation~\eqref{eq:layerOp}. 

For heat diffusion, we first learn a transformation of the initial node features $\mb X(0)$, and then apply the diffusion process. For the complex random walk, we instead learn a complex-valued weight matrix $\mb W$ as a function of $\mb X(0)$, and subsequently perform the complex random walk. In both approaches, the model parameters are optimized to produce linearly separable features at the diffusion time limit.

Figure~\ref{fig:syntheticAcc} (left) shows the classification results averaged over five random seeds. As expected, at diffusion time zero, a linear classifier fails to distinguish the classes. As diffusion progresses, heat diffusion continues to yield non-separable features, whereas the complex random walk consistently achieves over 90\% mean accuracy across seeds. This result highlights the potential of learning an effective complex-weighted graph structure that enables successful node classification at the diffusion limit.

\end{document}